\documentclass[pdflatex,sn-mathphys,iicol]{sn-jnl}%

\usepackage[utf8]{inputenc}
\usepackage[T1]{fontenc}

\jyear{2023}
\usepackage{natbib}
\usepackage{listings}

\usepackage{graphicx}
\usepackage[utf8]{inputenc}
\usepackage[algo2e,ruled,linesnumbered]{algorithm2e}
\usepackage{hyperref}       %
\usepackage{url}            %
\usepackage{amssymb}%
\usepackage{amsmath}

\usepackage{multirow}
\usepackage{xcolor}
\usepackage{array}
\usepackage{enumerate}
\usepackage{stmaryrd}%
\usepackage{ifthen}
\usepackage{dsfont}
\usepackage{subfigure}
\usepackage{comment}
\usepackage[normalem]{ulem}

\usepackage{soul}
\theoremstyle{thmstyleone}%
\newtheorem{theorem}{Theorem}%
\newtheorem{defn}[theorem]{Definition}%
\newtheorem{thm}[theorem]{Theorem}%
\newtheorem{proper}[theorem]{Property}%
\newtheorem{lemma}[theorem]{Lemma}%
\let\footnote\thanks

\definecolor{color1}{HTML}{D73027}
\definecolor{color2}{HTML}{FC8D59}
\definecolor{color3}{HTML}{FEE090}
\definecolor{color4}{HTML}{E0F3F8}
\definecolor{color5}{HTML}{91BFDB}
\definecolor{color6}{HTML}{4575B4}

\definecolor{rouge}{rgb}{1.0,0.0,0.0}
\definecolor{blue}{rgb}{0.0,0.0,1.0}

\newcommand*{\ie}{\textit{i.e.},}

\newcommand*\FinalRevision{false}
\newcommand*{\REMM}[1]{\ifthenelse{\equal{\FinalRevision}{false}}{\fbox{\begin{minipage}{0.4\textwidth}\textsl{\textcolor{rouge}{#1}}\end{minipage}}}{#1}}
\newcommand*{\REM}[1]{\ifthenelse{\equal{\FinalRevision}{false}}{\textbf{\uline{\textcolor{rouge}{#1}}}}{}}
\newcommand*{\TODO}[1]{\ifthenelse{\equal{\FinalRevision}{false}}{\fbox{\begin{minipage}{0.4\textwidth}\textsl{\textcolor{blue}{#1}}\end{minipage}}}{#1}}
\newcommand*{\rev}[1]{\ifthenelse{\equal{\FinalRevision}{false}}{\textbf{\textcolor{blue}{#1}}}{}}
\newcolumntype{M}[1]{>{\centering}m{#1}}

\newcommand*{\Cite}[1]{~\cite{#1}}

\newcommand*{\mset}[1]{\left\{#1\right\}}

\newcommand*{\cardinal}[1]{	\lvert#1\rvert}

\newcommand{\suchthat}[0]{\; \vert \;}

\newcommand{\Xfig}[1]{Figure~\ref{#1}}

\usepackage{enumitem}
\setlist{topsep=1.0em, itemsep=1.0em} 

\begin{document}
\title{Playing with Kruskal: a state-of-the-art report on watershed cuts}
\author[1]{Jean Cousty}\email{jean.cousty@esiee.fr}
\author*[3,1]{Laurent Najman}\email{laurent.najman@esiee.fr}
\author[1]{Benjamin Perret}\email{benjamin.perret@esiee.fr}
\author[2]{Deise Santana Maia}\email{deise.santanamaia@univ-lille.fr}

\affil[1]{\orgname{Univ Gustave Eiffel, CNRS}, \orgdiv{LIGM}, \orgaddress{\postcode{F-77454} \city{Marne-la-Vall\'ee},  \country{France}}}
\affil[2]{\orgname{Univ Lille, CNRS}, \orgdiv{UMR 9189 CRIStAL}, \orgaddress{\postcode{F-59000} \city{Lille},  \country{France}}}
\affil[3]{\orgdiv{Math. Department}, \orgname{Khalifa University}, \orgaddress{\city{Abu Dhabi}, \country{UAE}}}

\abstract{
In the framework of edge-weighted graphs, watersheds have proven to be linked to well-known optimization problems, as Minimum Spanning Tree, which allowed the design of efficient algorithms for computing (hierarchical) watershed segmentations.
In the present article, after reviewing the literature related to watershed segmentation, we present a detailed end-to-end pipeline of algorithms to compute (hierarchical) watershed segmentations, starting from the computation of graph-based image representations, up to the computation of connected components of the final (hierarchical) segmentation. 
We consider the several variations of watersheds, including their supervised and unsupervised versions, and the various ways of computing seeds, to name a few. 
For the first time, we bring together all these watershed notions and algorithms in a compact and understandable way. We aim at providing a reference for those interested in employing and reimplementing the watershed segmentation framework for their task at hand. %
}

\keywords{Watershed segmentation, graph, hierarchy, binary partition tree}

\maketitle

\section{Introduction}

The watershed transform was first proposed as a powerful tool for the segmentation of grayscale digital images \cite{digabel-lantuejoul_1978,beucher-lantuejoul_iwip1979}. Since then, numerous definitions and algorithms have been developed to implement the watershed transform in both 2D and 3D image domains, as well as within the framework of graphs \cite{vincent-soille_pami1991,meyer_afcet1991,najman-schmitt_sigproc1994,meyer_sigproc1994,couprie-bertrand_spie97,roerdink_fi2000,cousty-et-al_pami2008}. In all cases, the underlying idea of the watershed transform is to interpret an image (or a weighted graph) as a topographic surface, where a set of connected pixels with equal gray values surrounded by pixels of strictly higher gray values (or a set of adjacent vertices/edges surrounded by vertices/edges of strictly greater weights) forms a regional minimum of the surface. Each regional minimum corresponds to a zone of influence, known as a catchment basin. From any point (pixel or vertex) within the catchment basin of a regional minimum, there exists a descending path to that minimum. A descending path is defined as either a sequence of connected pixels with non-increasing gray levels or a sequence of vertices connected by edges with non-increasing weights.

A watershed segmentation is, therefore, a partition of an image into its catchment basins. 
Its hierarchical version consists of a sequence of nested partitions obtained by progressively merging the regions of an initial watershed segmentation \cite{beucher_ismm1994,najman-schmitt_pami1996,maia-et-al_jmiv2020}. 
When formalized within the framework of edge-weighted graphs, (hierarchical) watersheds are closely related to the optimization problem of finding minimum spanning trees (MST), as proved in \cite{cousty-et-al_pami2008,cousty-najman_ismm2011,cousty-et-al_ismm2013}.

In this article, we present an overview of state-of-the-art watershed definitions for edge-weighted graphs, covering flat, hierarchical, supervised, and unsupervised approaches. Then, following the trend of Najman et al. \cite{najman-et-al_ismm2013}, we revisit the watershed paradigm through the lens of Kruskal’s algorithm for computing MSTs \cite{kruskal-ams1956}. By doing so, we reveal a simple, modular, and powerful framework for computing watershed segmentations—flat or hierarchical, supervised or unsupervised—using a single underlying data structure: the Binary Partition Tree by Altitude Ordering (BPTAO) \cite{cousty-et-al_ismm2013}. This tree encodes the structure of the MST, supports efficient seed propagation, and enables the direct extraction of segmentation boundaries or hierarchical saliency maps \cite{cousty-et-al_jmiv2018}. By systematically organizing and extending previous results in the literature, we describe an end-to-end pipeline that goes from an input image to the final watershed segmentation, passing through graph construction, tree computation, seed selection, and cut extraction.

Our goal is to provide a comprehensive and versatile algorithmic state-of-the-art reference for researchers and practitioners working with watershed-based segmentation. We emphasize the interplay between combinatorial optimality, morphological filtering, and hierarchy construction, and demonstrate that all major variants of watershed cuts—basic, seeded, and hierarchical—can be derived and computed within our Kruskal-inspired framework. In doing so, we hope to foster better understanding, facilitate implementation, and open new avenues for integrating watershed cuts with modern learning-based methods.

Section~\ref{sec:relatedWork} briefly  reviews watershed approaches for image segmentation. Rather than aiming at an exhaustive survey, we position watershed cuts within a broader context, emphasizing their connections to combinatorial optimization and computational geometry. Section~\ref{sec:watershedCutsHierarchies} revisits the theoretical foundations of the watershed-cut framework. Building upon these principles, Section~\ref{sec:algoOverview} introduces the generic Playing-with-Kruskal algorithmic scheme, designed to unify the computation of the various instances of watershed cuts. The construction of the central data structure of this framework, namely the Binary Partition Tree, is detailed in Section~\ref{sec:bpt}. Section~\ref{sec:selection_seeds} addresses seed computation, while Sections~\ref{sec:extraction_watershed_cut} and~\ref{sec:extraction_cc} explain how  watershed-cut segmentations are obtained from the binary partition tree and the selected seeds. Finally, Section~\ref{sec:useCases} presents illustrative use cases implemented using the Higra library \cite{PCCGKN:softwarex2019}, demonstrating the practical relevance and flexibility of the proposed framework.

\section{Related work}
\label{sec:relatedWork}

Since the first definition of watershed segmentation in the late 1970s~\cite{digabel-lantuejoul_1978}, several algorithms and paradigms have been proposed to compute watershed segmentations, notably in the work of Vincent and Soille~\cite{vincent-soille_pami1991}. Oversegmentation being a common aspect of the original watershed definitions, where markers are defined as the set of regional minima, various approaches have been proposed to alleviate this issue, either by filtering the initial regions, computing a hierarchical segmentation, improving the selection of markers, or modifying the computation of the image gradient, such as the techniques proposed in \cite{beucher_ismm1994,najman-schmitt_pami1996,levner2007classification,angulo2007stochastic,wolf2018mutex}. Based on constant developments over the years, watershed segmentation attracted a lot of attention in many applicative fields such as remote sensing \cite{tarabalka-et-al_PR2010}, medical imaging \cite{grau-et-al_tmi2004,cousty-et-al_IVC2010}, biological imaging \cite{lin-et-al_cyt2003}, and material science \cite{faessel-jeulin_mic2010, fotos-et-al_jms2023}. In this section, we review the main steps of watershed developments for image analysis and discuss its possible or established connections to other important research fields, mainly combinatorial optimization and geometry.

\subsection{Early flooding-based methods on vertex-weighted graphs}

In the early work of Vincent and Soille \cite{vincent-soille_pami1991} and of Meyer \cite{meyer_afcet1991}, the proposed flooding algorithm relies on vertex-weighted graphs whose vertices correspond to image pixels and whose edges are given by an adjacency relation on the pixels. It starts with a set of marked seed pixels. Each seed pixel is given a unique label. Then, the marked pixels are iteratively augmented by incorporating neighboring pixels that are not yet marked, following an increasing order of weights. The final segmentation is a pixel labeling that indicates the catchment basins of the pixels and possibly the label ``watershed divide" for some pixels. Other flooding-based approaches are also described in \cite{soille1990determining,meyer1999multiscale,sofou2008generalized,meyer2019topographical}.

\subsection{Hierarchical and stochastic watersheds}
Hierarchical watershed approaches \cite{beucher_ismm1994, najman-schmitt_ismm94, najman-schmitt_pami1996, meyer_ismm1996, cousty-najman_ismm2011, cousty-et-al_ismm2013, maia-et-al_jmiv2020}  were introduced to extend the classical watershed to multiple scales. By organizing the image segmentation into nested regions forming a hierarchy \cite{benzecri-et-al_dunod1973,leclerc_1981,guigues-et-al_ijcv2006,cousty-et-al_jmiv2018}, it becomes possible to analyze structures at different levels of detail. One widely used approach relies on geodesic saliency (also known as ultrametric contour map \cite{arbelaez_cvprw2006,amfm_pami2011}), which ranks region contours based on their persistence across scales. This allows the extraction of a multi-scale hierarchy of regions that reduces spurious basins while preserving relevant structures.

Angulo and Jeulin \cite{angulo2007stochastic} proposed the stochastic watershed segmentation, in which the final segmentation is preceded by the computation of a probability density function (PDF) of contours. These contours are obtained by combining watershed segmentations computed from different random sets of markers. The idea is that important contours persist across multiple random marker sets, increasing their value in the probability density function, while small and low-contrast contours fade, leading to reduced oversegmentation. An exact calculation of the PDF is proposed by Stawiaski  and Meyer~\cite{stawiaski2010stochastic}, while an efficient algorithm for its computation is proposed by Malmberg and Hendriks~\citep{malmberg2014efficient}. In \cite{bernander2013improving}, an improvement to the original method introduces noise at every iteration and uses a randomly placed grid to distribute the markers.

\subsection{Edge-weighted graphs and combinatorial optimization}
Since the early 2000s, edge-weighted graphs have established themselves as a powerful framework for image segmentation. As presented in the following subsection, it became the reference framework for watershed segmentation allowing establishing connections between watersheds and important combinatorial optimization problems. The  most notable seminal work of Felzenszwalb and Huttenlocher \cite{felzenszwalb2004efficient} proposes a nearly linear-time segmentation algorithm in which an input image is represented as a graph (pixels as nodes and edge weights encoding dissimilarity between pixels) whose clustering strategy is closely related to the Minimum Spanning Tree (MST) optimization problem. Since then, this simple and efficient algorithm has been extended and adapted to a wide range of segmentation frameworks, such as~\cite{cayllahua2020hierarchical}. In the framework of edge-weighted graphs, Boykov et al. \cite{boykov-et-al_pami2002} consider the min-cut/max-flow optimization problem to solve various image analysis problems including segmentation. In \cite{falcao2004image}, the authors propose the Image Foresting Transform (IFT), which implements an extended version of Dijkstra's shortest paths forests algorithm on edge-weighted graphs, where each vertex is clustered with the initial seed (or subgraph) to which its path is shortest among all initial seeds. 

\subsection{Watershed cuts and unifying frameworks}

Following this trend, watershed cuts were introduced in the framework of edge-weighted graphs in the work of \cite{cousty-et-al_ismm2007,cousty-et-al_pami2008,cousty-et-al_pami2009}. They are defined following the intuitive idea of drops of water flowing on a topographic surface. In contrast to watersheds in previously considered discrete settings, the consistency of watershed cuts has been established: they can be defined equivalently by their ``catchment basins" through the steepest descent property or by the ``dividing lines" that separate these basins through the drop of water principle. Watershed cuts are also closely related to MSTs through an equivalence theorem, and hence to Kruskal's MST algorithm, which, as detailed in the later sections, provides a key ingredient for efficient and versatile computations for the regular flat case, the marker-based versions, and the hierarchical extensions.

Additionally, this strong connection to MSTs opened the doors to establishing links between watershed cuts and other optimization approaches on graphs \cite{allene-et-al_ismm2007}. Especially, it is shown that min-cuts coincide with watersheds for some particular weight functions. These links were further studied in \cite{couprie-et-al_pami2010, najman2017extending} that introduce a new generic combinatorial optimization problem on graphs parameterized by two exponent values $p$ and $q$ and allows several known methods to be unified. Depending on the values of $p$ and $q$, the solution is a minimal cut, a random walker~\cite{grady2006random}, a shortest paths forest, or finally, when $p$ tends to infinity, a new combinatorial object called a power watershed. The power watershed is a watershed cut for which ties are discriminated by a secondary optimality criterion. This result links several methods that are useful in image analysis and combines their advantages.

\subsection{Extension, learning and large-scale processing}

More recently, Mutex Watersheds \cite{wolf2018mutex}, which can be seen as an application of the power-watershed framework, extended the multicut formulation to a “seed-free” watershed approach with time complexity comparable to MSTs. The method incorporates both attractive interactions, which encourage merging of similar pixels, and repulsive long-range edges, allowing the graph to account for both short and long-range relationships between pixels. Its main advantages are that it does not require seeds or a predefined number of regions.

In the current deep learning era, watershed segmentation remains an important component of many image analysis pipelines and is still widely used in conjunction with modern machine learning methods, often as a pre- or post-processing step (see, e.g., \cite{machairas2015waterpixels,amfm_pami2011,eschweiler2019cnn,wolf2017learned,lux2019dic,maia2022watershed}).

In \cite{challa2019watersheds}, the authors proposed, in the spirit of random forest~\cite{breiman2001random}, to ensemble many ``weak" watershed cuts, each one using only a random subset of the features for computing the edge-weights. In combination with a triplet loss, this has been proved particularly efficient for segmenting hyperspectral images~\cite{Tripletws}.

Other approaches include interactive, out-of-core, and deep learning based segmentation approaches \cite{lebon2023interactive,lefevre2024out,barbosa2025hierarchy}. In \cite{lebon2023interactive}, the authors introduce an efficient interactive implementation where seeds can be added or removed without recomputing the entire segmentation, achieving a speed-up of up to 95 times, with some  multi-core parallelizations, when compared to the previous state-of-the-art high-performance library. In \cite{lefevre2024out}, an out-of-core watersheds algorithm is proposed for data which cannot fit the main memory, enabling parallelized computation without compromising the correctness of the watershed-cut framework. Da Fonseca et al.~\cite{barbosa2025hierarchy} leverages a deep network to provide user-based markers optimized for connected, smooth regions, while Lapertot et al.~\cite{lapertot2024end} uses deep learning to generate hierarchical contours, producing semantically coherent segmentation levels.

\subsection{Beyond images: surfaces, geometry, and terrains}

The notion of a watershed has also been applied to the partitioning of 3D surface meshes, notably following the  work of Mangan and Whitaker~\cite{mangan-whitaker_tvcg1999}. The watershed cut approach is, for instance, applied  in~\cite{Philipp-et-al_pr2011} to tasks such as artwork 3D model segmentation prior to classification and indexation. 

To partition surfaces, notions closely related to those involved in watershed-based image segmentation have been developed in the computational geometry community, in particular through ascending, descending, and Morse-Smale complexes \cite{edelsbrunner-et-al_CG2003,edelsbrunner-et-al_DCG2003,edelsbrunner-harer_2009}, defined in the setting  of simplicial complexes and often relying on discrete Morse functions~\cite{delgado2014skeletonization}. Each ascending complex is associated with one minimum and comprises the locations which can be reached from this minimum by an ascending line. From this perspective, ascending complexes can be thought of as analogous to the watershed regions (the catchment basins). The descending complex is a dual notion. It can be interpreted as the regions delineated by the thalwegs, a thalweg (the dual of a watershed) being made of ``closed contours" that pass through the bottoms of the valleys and separating the regional maxima. The Morse-Smale complex is obtained by intersecting the ascending and descending complexes. It is tightly linked to the critical points of the altitude function, namely minima, maxima and saddles. 

For Morse-Smale functions on 2D manifold surfaces, the crossing of ascending and descending complexes is necessarily at a saddle. The strong relation between watersheds and saddles (also referred to as pass points) is well established in mathematical morphology notably within the topological watershed framework \cite{bertrand_jmiv2005}.  The notion of watershed from image analysis has further been studied in \cite{cousty2014collapses} on simplicial complexes pseudomanifolds and, later,  for discrete Morse functions defined on such spaces \cite{bertrand-et-al_jmiv2023}. Conversely, the watershed has inspired approaches based on discrete Morse theory~\cite{delgado2014skeletonization}.

Finally, it is worth noting that approaches from both image analysis and  computational geometry to partition surfaces are widely applied to digital elevation models in hydrography and geoscience \cite{wu-et-al_geomorph2019, guilbert_jsis2021,soille-et-al_wrr2003,comic-et-al_SIT2005, hiatt-et-al_jgr2020,kleinhans-et-al_jcg2019,ophelders-et-al_socg2025}. In this context,  the term ridge is also frequently used  to denote watershed-like structures.

\subsection{Implementations}
The literature review presented in this section is not intended to be exhaustive, but rather to provide an overview of a selection of methods derived from the notion of watersheds and related concepts.  Watershed algorithms  for 2D and sometimes 3D image processing are widely implemented in both open-source libraries, such as scikit-image, ITK, and OpenCV, and commercial or domain-specific platforms including MATLAB, ImageJ, and Amira/Avizo. For a broad review on open-source watershed algorithm implementations, we refer the reader to \cite{kornilov2018overview,kornilov2022review} and to \cite{PCCGKN:softwarex2019}, the latter being the hierarchical graph analysis library used for the illustrations in this article.

\color{black}

\section{Watershed cuts and hierarchies}
\label{sec:watershedCutsHierarchies}

In this section, we provide an overview of the watershed cuts family. More precisely, we present three variations of watershed cuts:
\begin{itemize}
    \item basic watershed cuts: the method is fully unsupervised, and the input function is automatically partitioned into regions associated to the minima of the input function;
    \item seeded watershed cuts: the method is semi-supervised and the user, or an automated procedure, must additionally provide markers located inside each region which is to be segmented. As a result, the algorithm partitions the images into regions associated to the markers.
    \item hierarchical watershed cuts: the method is either fully unsupervised or semi-supervised and considers a hierarchical image representation where the watershed regions of the finest scale (associated to all minima of the input function) are progressively merged to form watershed regions of higher scales representing object of higher semantic level.
\end{itemize}

\subsection{Watershed cuts}
\label{sec:wsc}
The watershed transform introduced by Digabel and Lantuejoul \cite{digabel-lantuejoul_1978} for morphological segmentation and later popularized by Vincent and Soille \cite{vincent-soille_pami1991} is used as a fundamental step in many image segmentation procedures. A grayscale image, or more generally a function, is seen as a topographic surface: the gray values become the elevations, the basins, and valleys correspond to dark areas whereas the mountains and crest lines correspond to light areas. Intuitively, the watershed is a subset of the domain, located on the ridges of the topographic surface, that delineates its catchment basins. It may be thought of as a separating line-set from which a drop of water can flow down towards several minima. For applications
to image segmentation, the watershed is often computed from the gradient magnitude of an image. Therefore, the resulting contours are located on high gradient contours of the image,
which often correspond to the borders of the objects of interest.

Following the intuitive drop of water principle presented in the previous paragraph, the watershed cuts, a notion of a watershed in edge-weighted graphs, were introduced in \cite{cousty-et-al_pami2009}. A watershed cut is indeed a graph cut: it is only made of edges, and it partitions the vertex set of the underlying graph. The chosen graph cut satisfies two principles: (i) the induced partition associates exactly one region to each regional minimum of the graph; and (ii) each edge of the graph cut satisfies the intuitive drop of water principle presented in the previous paragraph (\ie{} a ``drop of water" falling down on this location can flow down with a descending path towards two distinct regional minima of the graph). In application to image analysis, the considered edge-weighted graph~$(V,E,w)$ is often chosen such that:
\begin{itemize}
    \item each vertex in~$V$ is a pixel or voxel of the image which is to be segmented
    \item each edge in~$E$ is a pair of pixels which are neighbor for a usual adjacency relation such as the 4- or 8- adjacency relation or the 6-, 18-, or 26- adjacency in 3D
    \item the weight $w(\{x,y\})$ of an edge linking two vertices~$x$ and~$y$ measures the dissimilarity between the two pixels linked by the edge. A simple and often used case consists of taking any distance between the brightness, color, or features associated to the pixels~$x$ and~$y$. Recently, it has become common to use contour maps or edge detection produced by deep learning models to weight the edges, as these models are able to account for higher-order semantic information (see \cite{liu-et-al_cvpr2019,soria-et-al_PR2023,vonGioi-randall_ipol2022}). A good choice is often a (linear) combination of learned gradients and local dissimilarities in order to capture multiscale information. 
\end{itemize}
The notions introduced in this section are illustrated in \Xfig{fig:WC}.

\begin{figure}[htb]
  \begin{center}
        \includegraphics[width=0.49\linewidth]{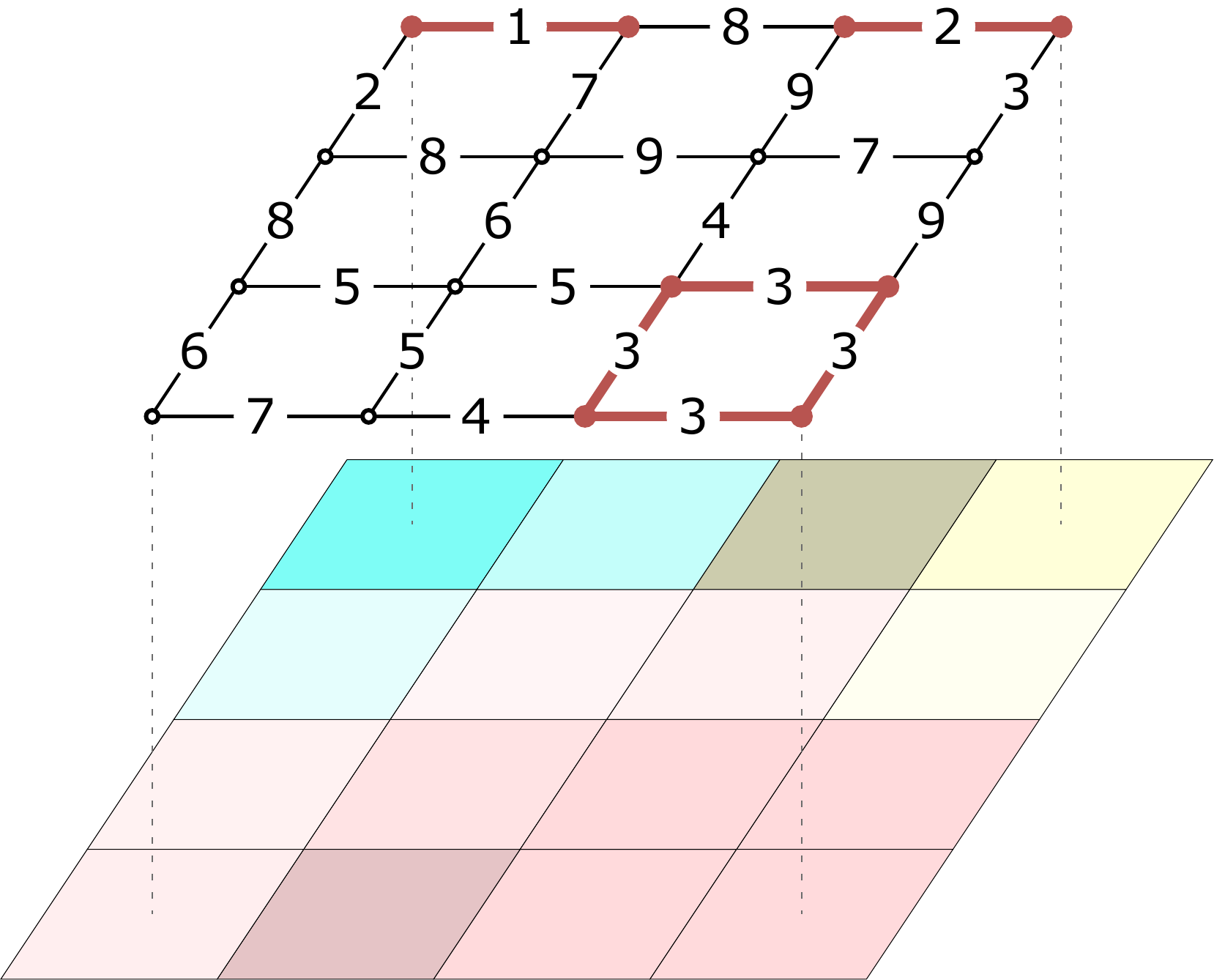}
        \includegraphics[width=0.49\linewidth]{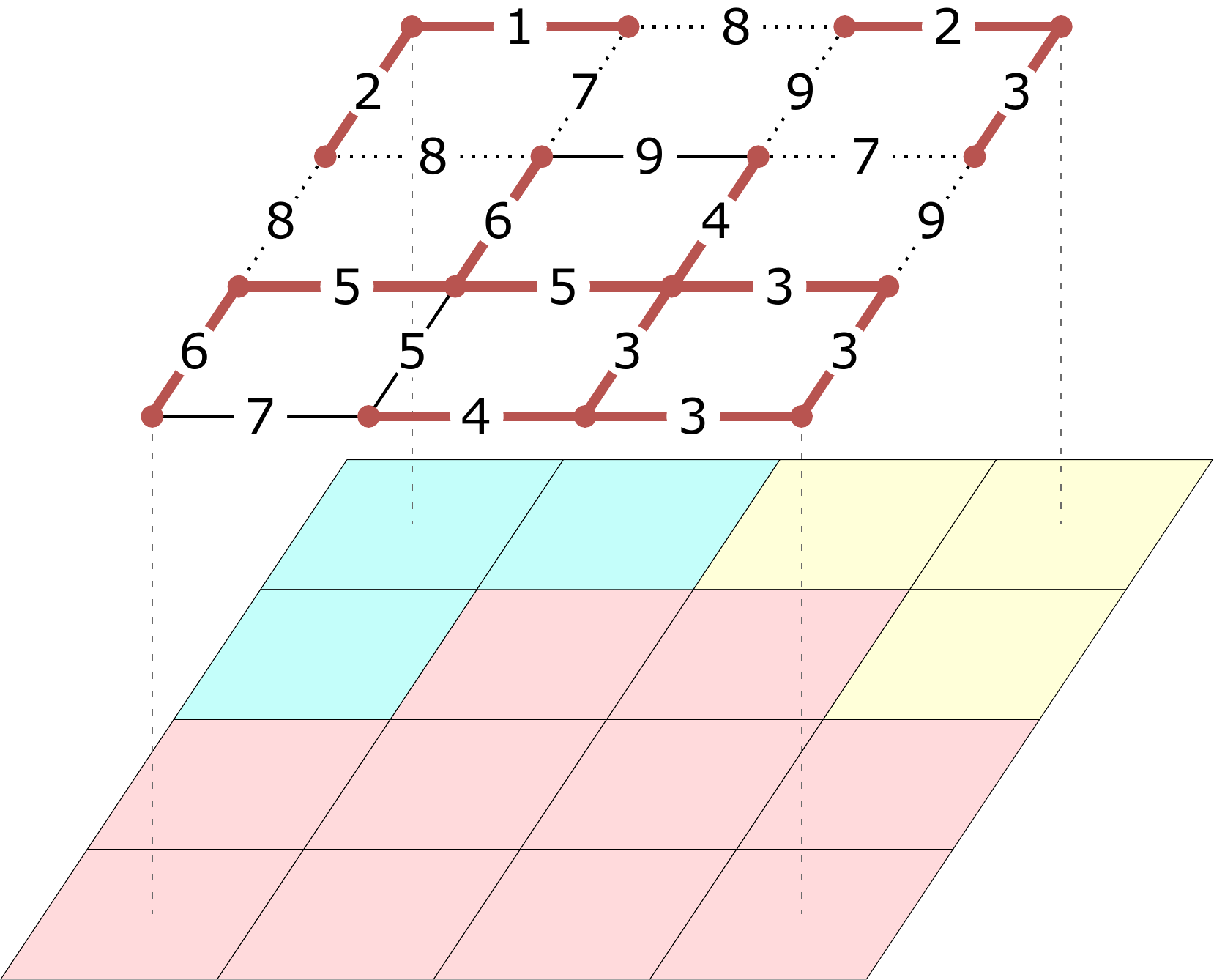}\\
    \caption{ \label{fig:WC} \textbf{Left}:  A digital image (bottom) and an edge-weighted graph representing this image (top): vertices correspond to pixels and edges represent the adjacency between pixels. The edge weights measure the dissimilarity between the values of adjacent pixels. The three minima of the edge-weighted graph are represented in bold red. \textbf{Right}: a watershed cut of the edge-weighted graph, which is represented by dashed edges in the graph (top) and by a color label in the image domain (bottom). The minimum spanning forest rooted in the minima of the graph associated to the watershed is depicted in bold red.}
  \end{center}
\end{figure}

Having such a weighted graph $G = (V,E,w)$, let us now formalize the three main notions necessary to define watershed cuts.
\begin{itemize}
    \item {\bf Regional minimum.} Intuitively, a regional minimum of~$G$ is a subgraph~$X$ of constant weight from which it is not possible to escape without climbing. Formally, the subgraph~$X$ is a {\em minimum of~$G$} if every two edges of~$X$ have the same weight and every edge adjacent to an edge of~$X$ has a weight strictly higher than the weight of an edge of~$X$.
    \item{\bf Graph cut.} Any partition of the vertex set of a graph which is made of connected regions is called a {\em graph cut} and any graph cut~$C$ induces a unique {\em cut set} composed of every edge of~$G$ linking two end-vertices in two distinct regions of the partition~$C$.
    \item {\bf Drop of water principle.} Intuitively, a graph cut satisfies the {\em drop of water principle} if from every edge in the cut set it is possible to reach two distinct minima of~$G$ by a path of descending weights, more precisely, if for any edge~$\{x,y\}$ in the cut set, there exists two paths~$\pi_x = (x_0=x, \ldots, x_k)$ and~$\pi_y = (y_0=y, \ldots,  y_\ell)$ such that:
    \begin{itemize}
        \item $\pi_x$ and~$\pi_y$ are paths of descending weights that do not cross the cut set;
        \item the weight of the first edge of $\pi_x$ (resp. $\pi_y$) is not greater than the weight of~$\{x,y\}$;
        \item $x_k$ and~$y_\ell$ belong to two distinct minima of~$G$.
    \end{itemize}
\end{itemize}

\begin{defn}[watershed cut]
    A {\em watershed cut of~$G$} is a graph cut $C$ which is such that each region of~$C$ contains exactly (the vertex set of) one regional minimum of~$G$, and which satisfies the drop of water principle.
\end{defn}

Watershed cuts have been deeply studied in the literature and are known to satisfy several properties establishing their consistency or combinatorial optimality \cite{cousty-et-al_pami2008,allene-et-al_ivc2010,couprie-et-al_pami2010} and drawing links with topology-preserving transforms \cite{cousty-et-al_pami2009,cousty2014collapses}, shortest paths or fuzzy connectedness \cite{cousty-et-al_pami2009}. In our context, it is important to recall a characterization of watershed cuts based on the minimum spanning forest/tree problems, as the algorithms that we are going to propose in further sections are based on the famous Kruskal's minimum spanning tree algorithm.

\begin{thm}[optimality \cite{cousty-et-al_pami2008}]
\label{thm:WCOptimality}
 A cut~$C$ is a watershed cut of~$G$ if and only if there exists a minimum spanning forest~$F$ of~$G$ which is rooted in the regional minima of~$G$ and whose connected component partition is equal to~$C$, more precisely, if and only if there exists a subgraph~$F$ of~$G$ such that:
 \begin{enumerate}
    \item each connected component of $F$ includes exactly (the vertex set of) one regional minimum of~$G$;
    \item the connected component partition of~$F$ is $C$; and
    \item the sum of the weights of the edges in~$F$ is minimum among the sums of the weights of the subgraphs of~$G$ that satisfy the above two conditions.
 \end{enumerate}
\end{thm}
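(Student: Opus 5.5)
We reduce the statement to the classical MST characterization on a contracted graph. Let $M$ be the union of the regional minima of~$G$, and let $G'$ be obtained from~$G$ by contracting each minimum into a single vertex, or equivalently by adding a virtual root $r$ linked to every minimum by edges of weight $-\infty$. A subgraph $F$ satisfying conditions 1--2 with minimum weight (condition 3) then corresponds to an MST of~$G'$ containing all edges of the minima. Indeed, inside a minimum all edges have equal weight, so we may assume $F$ contains a spanning tree of each minimum. The standard cycle/cut optimality criteria for MSTs then transfer. $F$ is a minimum spanning forest rooted in~$M$ if and only if, for every edge $\{x,y\}$ not in~$F$ whose endpoints lie in distinct components of~$F$, the weight $w(\{x,y\})$ is at least the maximum weight on the path joining $x$ and~$y$ in $F \cup$ (virtual root edges). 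Concretely, this maximum is the larger of the maximum weights on the paths from $x$ and from $y$ down to their respective minima in~$F$.

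\textbf{Direction ($\Leftarrow$).} Let $F$ be such a forest with partition~$C$. Condition~1 of the watershed definition holds by construction. For the drop of water principle, take $\{x,y\}$ in the cut set. First, we show that we may choose $F$ so that the path in~$F$ from $x$ to its minimum is descending, i.e. of non-increasing weights. We get this by an exchange argument. If a path edge $e'$ is followed by a heavier edge $e''$, remove $e''$ and reconnect using the lighter edges; minimality of $F$ forbids any strict improvement. A cleaner route, which we will try first, is to take $F$ to be the forest produced by Kruskal's algorithm started from the minima, since any MSF with partition $C$ can be taken Kruskal-like by tie-breaking. Along Kruskal's process, each vertex is attached through edges of weight at most the current level, which gives descending paths. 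The optimality criterion then gives that the first edge of $\pi_x$ and of $\pi_y$ has weight $\le w(\{x,y\})$. These paths stay inside a region, so they do not cross the cut set, and they end in distinct minima because $x$ and $y$ lie in distinct components.

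\textbf{Direction ($\Rightarrow$).} Let $C$ be a watershed cut. In each region $R$, we build a forest $F_R$ by taking, for every vertex, a descending path to the minimum of~$R$, which exists by the drop of water principle applied to boundary vertices. For interior vertices we use the fact that, in a region containing a single minimum, every vertex has a descending path within the region; we establish this via a steepest-descent argument inside~$R$. We then extract a spanning tree of $R$ rooted in the minimum from these paths, using Kruskal restricted to~$R$. Finally, we verify the cut optimality criterion above. For a cut edge $\{x,y\}$, the drop of water property gives descending paths whose first edges have weight $\le w(\{x,y\})$, so all edges on them are $\le w(\{x,y\})$. Choosing $F_R$ as an MST of the region with its minimum contracted ensures that the tree path maxima are bounded by the maxima of these descending paths. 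Hence $w(\{x,y\})$ dominates the path maximum, and $F$ is a minimum spanning forest rooted in~$M$.

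The main obstacle is this last step and its counterpart in ($\Leftarrow$): relating the maximum weight on the tree path in an MSF to the existence of a descending path with small first edge. The plan is to handle it via the minimax (bottleneck) property of MSTs. In an MST, the path between two vertices minimizes the maximal edge weight among all paths. Therefore a descending path from $x$ to its minimum with first edge of weight $\le \lambda$ forces the tree path maximum to be $\le \lambda$, and conversely for the Kruskal-built forest.
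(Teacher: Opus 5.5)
\textbf{Gap in ($\Leftarrow$): why the forest paths are descending.} You need the path in $F$ from $x$ to its minimum to be descending, and neither of your justifications establishes this. Saying that Kruskal ``attaches each vertex through edges of weight at most the current level'' points the wrong way. A component hooked onto a rooted tree at level $\lambda$ has internal edges of weight $\le\lambda$, whereas a descending path through it needs them to be $\ge\lambda$. The argument also never uses that the roots are the \emph{regional minima}. It therefore applies verbatim to forests rooted in arbitrary markers, where the conclusion is false: take the path $a,b,c$ with $w(\{a,b\})=1$, $w(\{b,c\})=5$ and marker $\{c\}$; Kruskal yields the tree path of weights $1,5$ from $a$. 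This is exactly where the minima-rooted statement differs from the marker-rooted one (Theorem~\ref{thm:seededWC}). Your exchange sketch has the same hole, because it presupposes that a lighter reconnecting edge exists, which is what has to be proved. Your closing ``conversely for the Kruskal-built forest'' rests on the same unproved fact.

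\textbf{The missing lemma.} For every $\lambda$, each connected component of $\{e\in E : w(e)\le\lambda\}$ that contains an edge contains a regional minimum. To see this, take its lightest edge, of weight $\mu$. Its component in the $\mu$-lower level set has all edges of weight $\mu$ and all adjacent edges strictly heavier, so it is a minimum.

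\textbf{How the lemma closes the gap.} Suppose the tree path $(x_0=x,\ldots,x_k)$ had $w(e_1)<w(e_2)$ for $e_1=\{x_{j-1},x_j\}$ and $e_2=\{x_j,x_{j+1}\}$. The lemma at level $w(e_1)$ gives a path from $x_j$ to some minimum with all weights $\le w(e_1)<w(e_2)$. This contradicts the bottleneck property of the tree path from $x_j$ in your augmented graph. Equivalently, delete $e_2$ and reconnect the detached, minimum-free subtree by the edge where that path leaves it; this gives a strictly lighter forest rooted in the minima. Hence \emph{every} MSF rooted in the minima has descending tree paths. No tie-breaking re-choice of $F$ is needed, and the rest of ($\Leftarrow$) then goes through.

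\textbf{The ($\Rightarrow$) direction.} This direction is sound once $F_R$ is taken to be an MST of $R$ with its minimum contracted. That choice also covers the intra-component non-tree edges, which your stated ``iff'' criterion omits. Note that $\pi_x$ ends in the minimum of $R$, since a watershed cut cannot split a minimum. The descending paths for interior vertices are never used and can be dropped.

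For comparison, the paper does not prove this theorem and only cites it. Your virtual-root reduction is the same MST--MSF construction that its appendix borrows from that reference.
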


\subsection{Seeded watershed cuts}
\label{sec:seededWatershedCut}
In practice, it is often desirable to supervise the segmentation process by providing a set of markers or seeds such that the resulting partition contains exactly one region associated with each marker that is provided. In this case, each marker is considered a clue to the location of the object of interest to be segmented.

Intuitively, as early described by Beucher and Meyer in \cite{beucher-meyer_1992}, such seeded segmentation can be obtained by considering the watershed cut of a modified version of the graph~$G$. The idea is then to consider a modified weight map~$w'$ such that the minima of~$w'$ correspond exactly to the provided markers and such that $w'$ is ``as close as possible" to~$w$. This filtering $w'$ can be obtained in two steps. First, the process begins by ``digging" (\ie{} setting to a minimal value) the weights of the edges that belong to one of the markers. Then, a closing by reconstruction is performed. The result of this closing by reconstruction, a filtering process studied in mathematical morphology, can be obtained using a data structure called a component tree. It consists of raising the weights of the undesired minima until they vanish. The watershed cut of the modified version $w'$ of~$w$ is called a watershed cut of~$w$ seeded in the provided marker set~$\mathcal{M}$. It indeed contains one and only one region for each of the markers. The notion of a seeded watershed cut (and also Theorem~\ref{thm:seededWC}) is illustrated in \Xfig{fig:seededWC}.

\begin{figure}[htb]
  \begin{center}
        \includegraphics[width=1\linewidth]{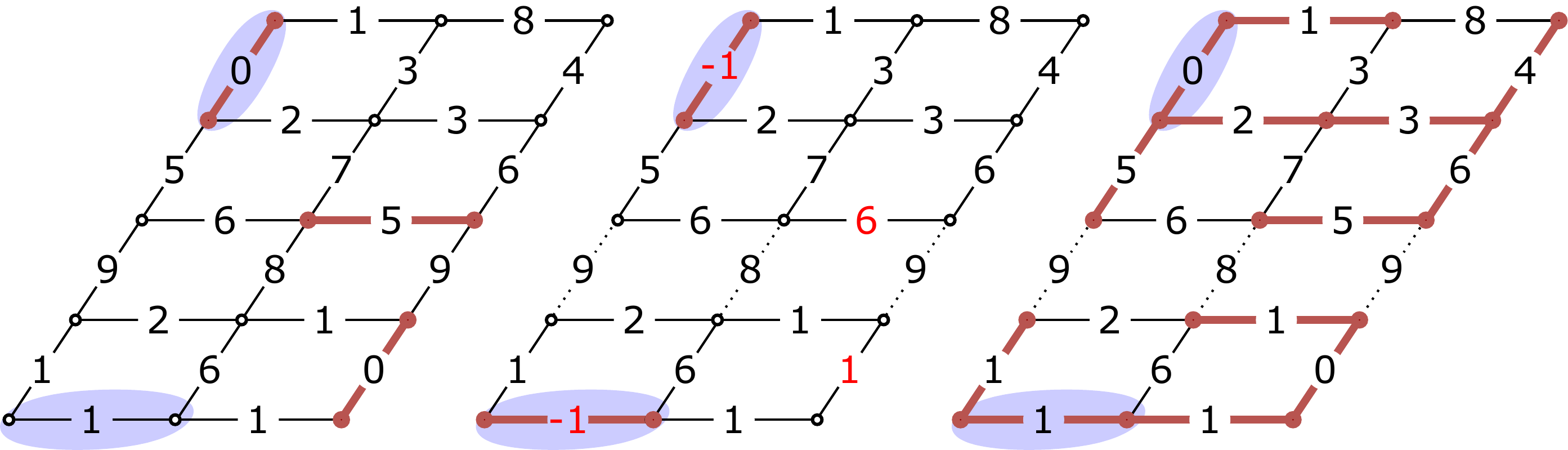}
    \caption{\label{fig:seededWC} \textbf{Left}:  an initial edge-weighted graph with three minima in bold red and two seeds overlaid in blue. \textbf{Middle}: a closing by reconstruction from the initial graph and the seeds (changed weights in red). Edge weights within seeds are set to -1, forcing them to become minima. The closing by reconstruction then fills the minima of the original graph. Finally, the associated watershed cut is depicted by dashed edges. \textbf{Right}: a minimum spanning forest (bold red) of the initial graph rooted in the seeds, together with its induced graph cut (dashed edges), which coincides with the watershed of the closing by reconstruction in the middle figure, as stated in Theorem~\ref{thm:seededWC}.
}
  \end{center}
\end{figure}

A formal definition of seeded watershed cut can be found in \cite{allene-et-al_ivc2010}. It is possible to obtain the result of a seeded watershed cut without actually computing the modified version~$w'$ of~$w$. Instead of composing filtering followed by watershed cuts, it is sufficient to consider the cut induced by a minimum spanning forest rooted in the provided markers. More precisely, the following property has been proved in \cite{allene-et-al_ivc2010}.

\begin{thm}[seeded MSF cuts, \cite{allene-et-al_ivc2010}]
\label{thm:seededWC}
 Let~$\mathcal{M}$ be a marker set. If~$C$ is the connected component partition of a minimum spanning forest of~$G$ rooted in the marker set~$\mathcal{M}$, then $C$ is a watershed cut of~$G$ for the marker set~$\mathcal{M}$.
\end{thm}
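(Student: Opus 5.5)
The plan is to reduce the statement to Theorem~\ref{thm:WCOptimality}, applied to the modified weight map~$w'$ obtained by digging the markers and then closing by reconstruction. The seeded watershed cut of~$w$ for~$\mathcal{M}$ is by definition a watershed cut of $G'=(V,E,w')$. So it suffices to show that a minimum spanning forest~$F$ of~$(V,E,w)$ rooted in~$\mathcal{M}$ is also a minimum spanning forest of~$G'$ rooted in the regional minima of~$w'$. Theorem~\ref{thm:WCOptimality} then says that its component partition~$C$ is a watershed cut of~$G'$. Concretely, $w'$ is taken to be the largest map below~$w$ that is minimal on marker edges and has no regional minima except the markers. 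A standard characterization gives it edge-wise: $w'(e)$ is the minimax (bottleneck) value, over paths from~$e$ to a marker, of the maximum of~$w$ along the path. I would first record two facts: the regional minima of~$w'$ are exactly the markers of~$\mathcal{M}$, and $w'\geq w$ off the markers.

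The key step is to compare rooted minimum spanning forests for~$w$ and~$w'$ through the usual device of adding a virtual root vertex. Contract, or link to a new vertex~$r$ with weight $-\infty$, all vertices of each marker. Forests rooted in~$\mathcal{M}$ then correspond to spanning trees of the augmented graph, and a rooted forest is minimum exactly when its tree is an MST. MSTs are characterized by the cycle/cut property: every edge~$e$ of the tree is a minimum edge of the fundamental cut it defines. I would show that if~$F$ satisfies this property for~$w$, it also satisfies it for~$w'$.

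Take an edge~$e$ of~$F$ outside the markers. Because $F$ is minimum for~$w$, every vertex is joined to its root by a path in~$F$ whose maximal weight realises the bottleneck distance to the markers. This is the minimax-path property of MSTs, and it forces $w'(f)=\max(w(f),\text{bottleneck level})$ along~$F$. From this one checks that $w'(e)\le w'(f)$ for every edge~$f$ of the fundamental cut of~$e$. The argument uses monotonicity: $w'$ is an increasing function of~$w$ combined with minimax values, and the bottleneck of the far side of the cut is at least $w(e)$.

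I expect this step to be the main obstacle: making precise that the flattening performed by reconstruction (plateaus created at filled minima) preserves the cut optimality of~$F$. I would handle it by writing $w'(f)=\max\bigl(w(f),\min(\beta(x),\beta(y))\bigr)$ for $f=\{x,y\}$, where~$\beta$ is the bottleneck distance of a vertex to~$\mathcal{M}$. Then I would treat the cases $w(f)\ge$ or $<$ the bottleneck level separately.

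Once $F$ is shown to be an MSF of~$G'$ rooted in the minima of~$w'$, condition 1 of Theorem~\ref{thm:WCOptimality} holds because each component of~$F$ contains exactly one marker, which is one minimum of~$w'$. Condition 3 is the optimality just established. The theorem then gives that~$C$ is a watershed cut of~$w'$, that is, a watershed cut of~$G$ for the marker set~$\mathcal{M}$.
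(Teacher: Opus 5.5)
Your proposal is correct and takes the same route the paper relies on. The paper does not reprove this result; as in the forward implication of Theorem~\ref{thm:multi-scaleOptimality} in the appendix, it combines the invariance of rooted minimum spanning forests under flooding (Theorem~5.2 of \cite{allene-et-al_ivc2010}) with Theorem~\ref{thm:WCOptimality} applied to a map whose minima are the markers. Your bottleneck formula gives a valid self-contained proof of that invariance step: for $e=\{a,b\}\in F$ with $a$ on the side away from the marker, $w'(e)=\beta(a)\le\beta(x)$ for every $x$ on that side, which closes both cases of your split. One slip to fix: $w'$ is the \emph{smallest} map that is $\geq w$ off the markers and has no regional minima other than them, not the largest map below $w$, as your own formula and the inequality $w'\geq w$ already show.
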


For the sake of completeness, we recall that a {\em minimum spanning forest of~$G$ rooted in~$\mathcal{M}$} is a subgraph~$F$ of~$G$ such that:
\begin{enumerate}
    \item each connected component of~$F$ includes exactly one of the elements of~$\mathcal{M}$;
    \item $F$ spans~$G$, \ie{} every vertex of~$G$ is a vertex of~$F$; and
    \item the sum of the weights of the edges in $F$ is minimal, that is to say, the sum of the edge weights in any subgraph of~$G$ for which the two above conditions hold true is greater than or equal to the sum of the weights of the edges in $F$.
\end{enumerate}

\subsection{Hierarchical watershed cuts}

\label{sec:subsection_hierarchical_watershed_cuts}

In some cases, the objects of interest do not all appear at the same scale, and some of them can be nested into each other. This is, for instance, the case of the nose, the face, and the silhouette of a person in a picture. In such cases, a multiscale or hierarchical image segmentation is desirable. Early studies of this approach date back to \cite{beucher_ismm1994} and \cite{najman-schmitt_pami1996} in mathematical morphology and to \cite{guigues-et-al_ijcv2006}, under the name of scale-set theory, from an optimization perspective. Such a hierarchical representation of an image can be envisioned thanks to hierarchical watershed cuts \cite{cousty-najman_ismm2011,cousty-et-al_ismm2013,maia-et-al_prl2019,maia-et-al_jmiv2020} as presented in this section. 

From a high-level topographical point of view, a {\em hierarchical watershed} can be obtained by:
\begin{enumerate}
\item considering a {\bf series~$\Phi = (\Phi_0(G)=G, \ldots, \Phi_N(G))$ of progressive filtrations} (described below) of the input weighted graph $G$ seen as a relief, for any $N \geq 1$, ordered from the lightest to the most aggressive; and
\item considering an associated {\bf series~$(C_0, \ldots, C_N)$ of watershed cuts} of these filtrations where~$C_\lambda$ is a watershed of~$\Phi_\lambda(G)$. 
\end{enumerate}
The obtained series~$(C_0, \ldots, C_N)$ of partitions is called a {\em hierarchical watershed} if the following conditions are met:
\begin{itemize}
\item{\bf Hierarchy principle (\cite{benzecri-et-al_dunod1973,leclerc_1981,guigues-et-al_ijcv2006,ronse_jmiv2014}).} At every scale, every region can be obtained as the merging of regions at the previous scale. More formally, for every~$\lambda$ in~$\{1,\ldots, N\}$, the cuts~$C_{\lambda-1}$ and $C_{\lambda}$ are {\em nested}, meaning that, each region~$R$ of~$C_{\lambda-1}$ is included in a region $R'$ of~$C_{\lambda}$, hence, conversely, every region~$R'$ in $C_{\lambda}$ is the union of the regions of~$C_{\lambda-1}$ which are included in~$R'$ ($R' = \cup \{R \suchthat R \in C_{\lambda-1}, R \subseteq R'\}$).
\item {\bf Extensive connected operators.} Intuitively, the filtration process involved in hierarchical watershed acts by progressively removing (or ``filling in") the minima while leaving unchanged the other weights of the graphs corresponding to the highest mountains and crest lines. The preservation/removal of a minimum in a filtration is in general driven by a measure of importance of that minimum like its area, depth or more frequently the area, depth or volume of its drainage region. Such filters are known as connected extensive operators in mathematical morphology \cite{salembier-serra_tip1995, vachier-meyer_nsip1995, salembier-wilkinson_sigprocmag2009,breen-jones_cviu1996}. More formally, given an increasing measure~$\mu$ ({\em e.g.}, the size) mapping a positive real value to any subgraph and a scale parameter~$\lambda$, the filtering result is obtained by: 
\begin{enumerate} 
\item considering the set of connected components~$C(G,w,k)$ of the lower-threshold of the weights $w$ at every possible value~$k$ (\ie{} the graph obtained by removing from~$G$ the edges of weight greater than~$k$); 
\item removing from the set $C(G,w,k)$ every connected component for which the measure~$\mu$ is less than the scale parameter~$\lambda$, yielding the set $C'(G,w,k)$;
\item building the filtered map~$w'$ by stacking the connected components remaining in the sets $C'(G,w,k)$, \ie{} setting, for every edge~$u$ of~$G$, $w'(u) = \min\{ k \suchthat u$~is an edge of a connected component in~ $C'(G,w,k)\}$. In the following, any considered filtration series results from an extensive connected operator.
\end{enumerate}
\end{itemize}

The notion of a hierarchical watershed is illustrated in \Xfig{fig:HierarchicalWC}.

\begin{figure}[htb]
  \begin{center}
  \begin{tabular*}{1\linewidth}{@{\extracolsep{\fill}}c c}
        \subfigure[$\Phi_0(G) =G$]{
        \includegraphics[width=0.47
        \linewidth]{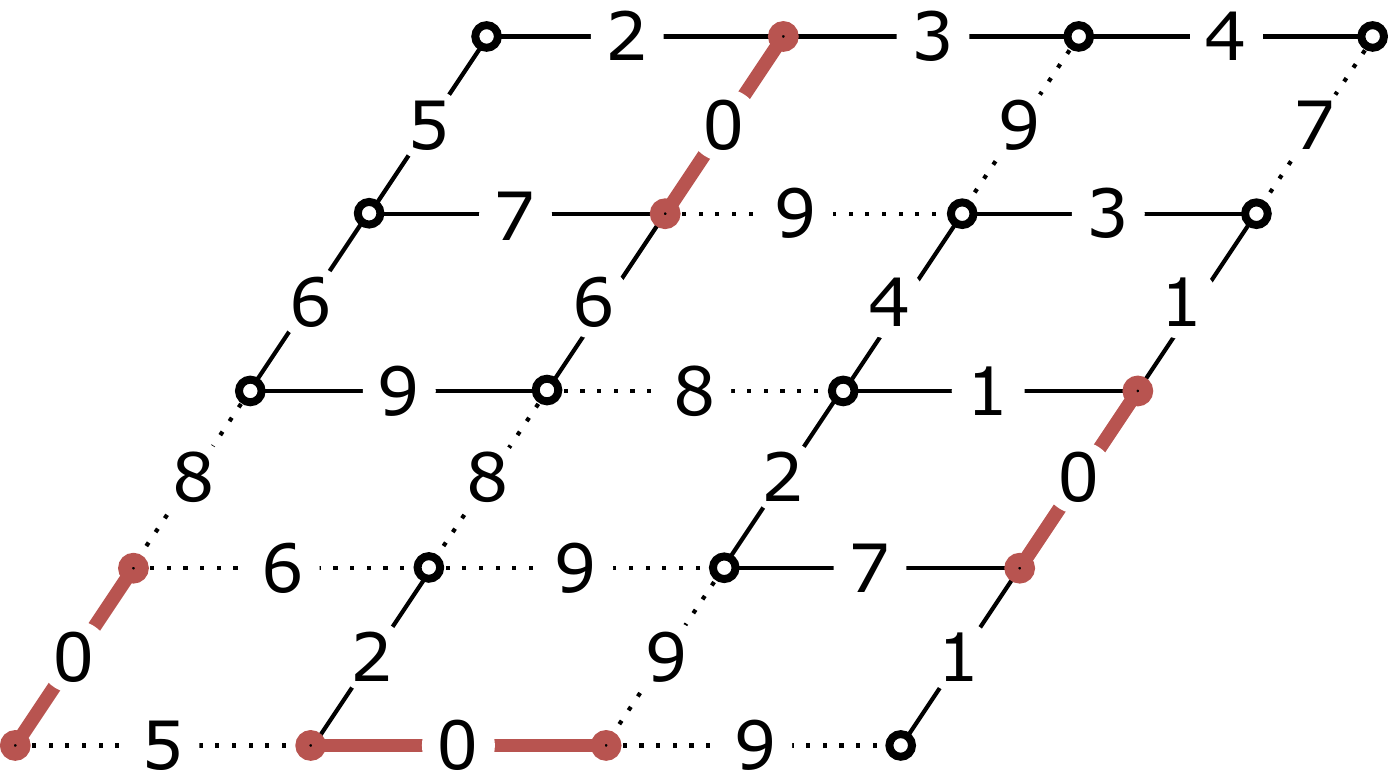}}&
        \subfigure[$\Phi_1(G)$]{        \includegraphics[width=0.47\linewidth]{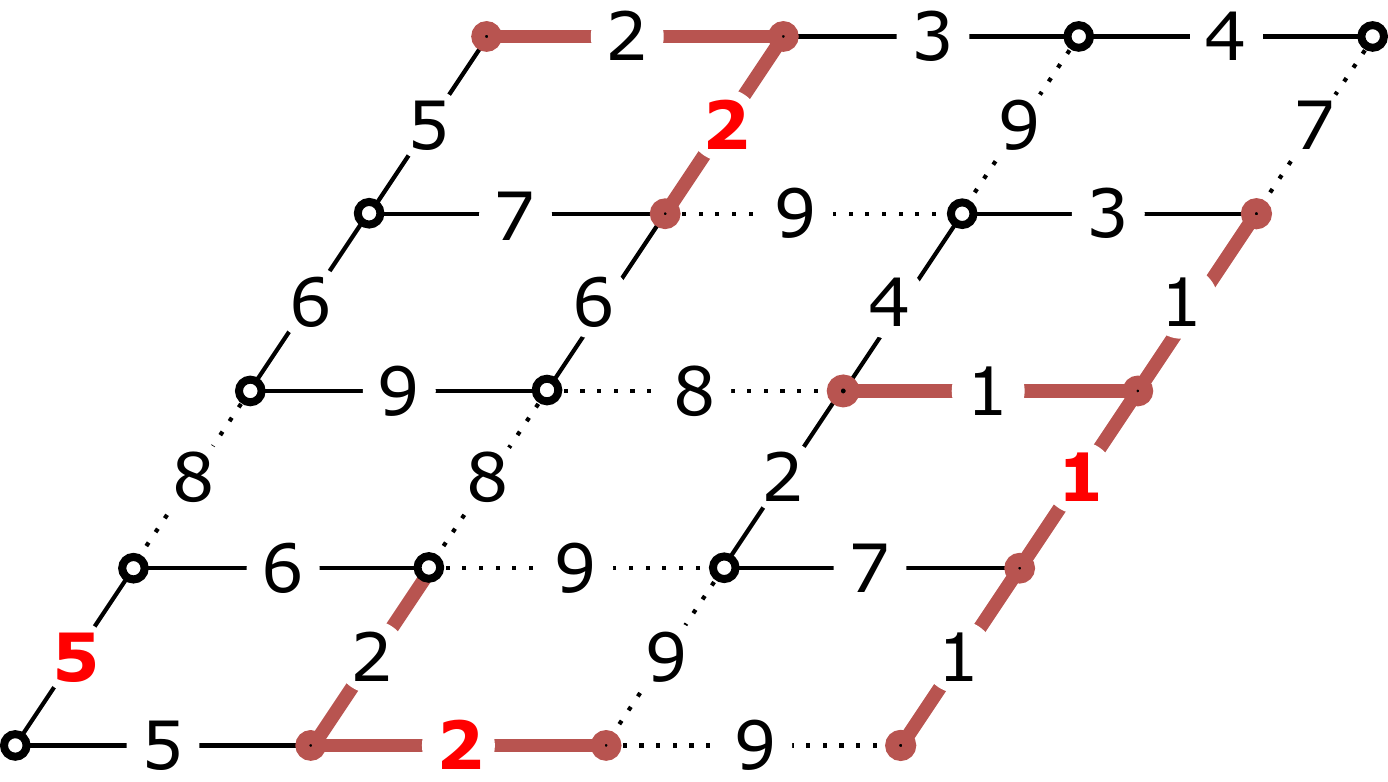}}\\
        \subfigure[$\Phi_2(G)$]{        \includegraphics[width=0.47\linewidth]{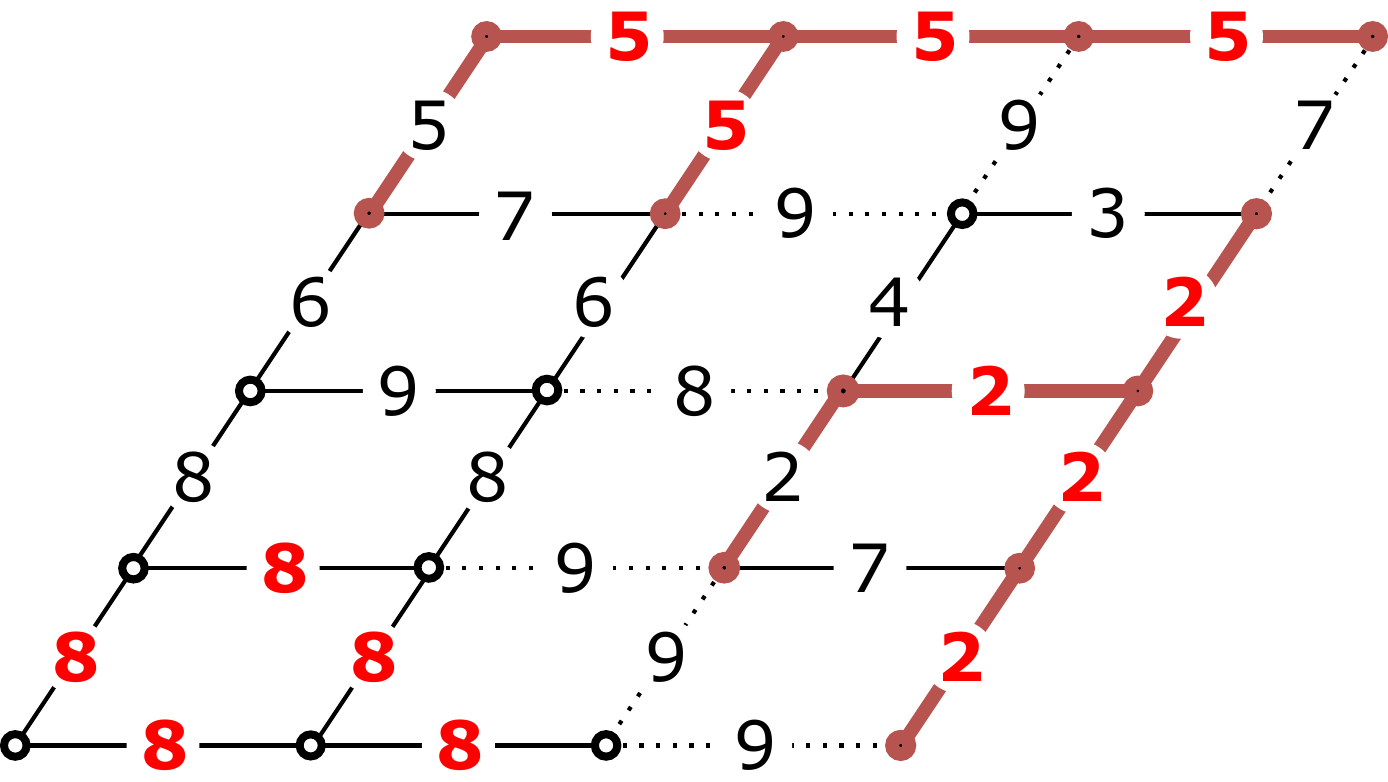}}&
        \subfigure[$\Phi_3(G)$]{        \includegraphics[width=0.47\linewidth]{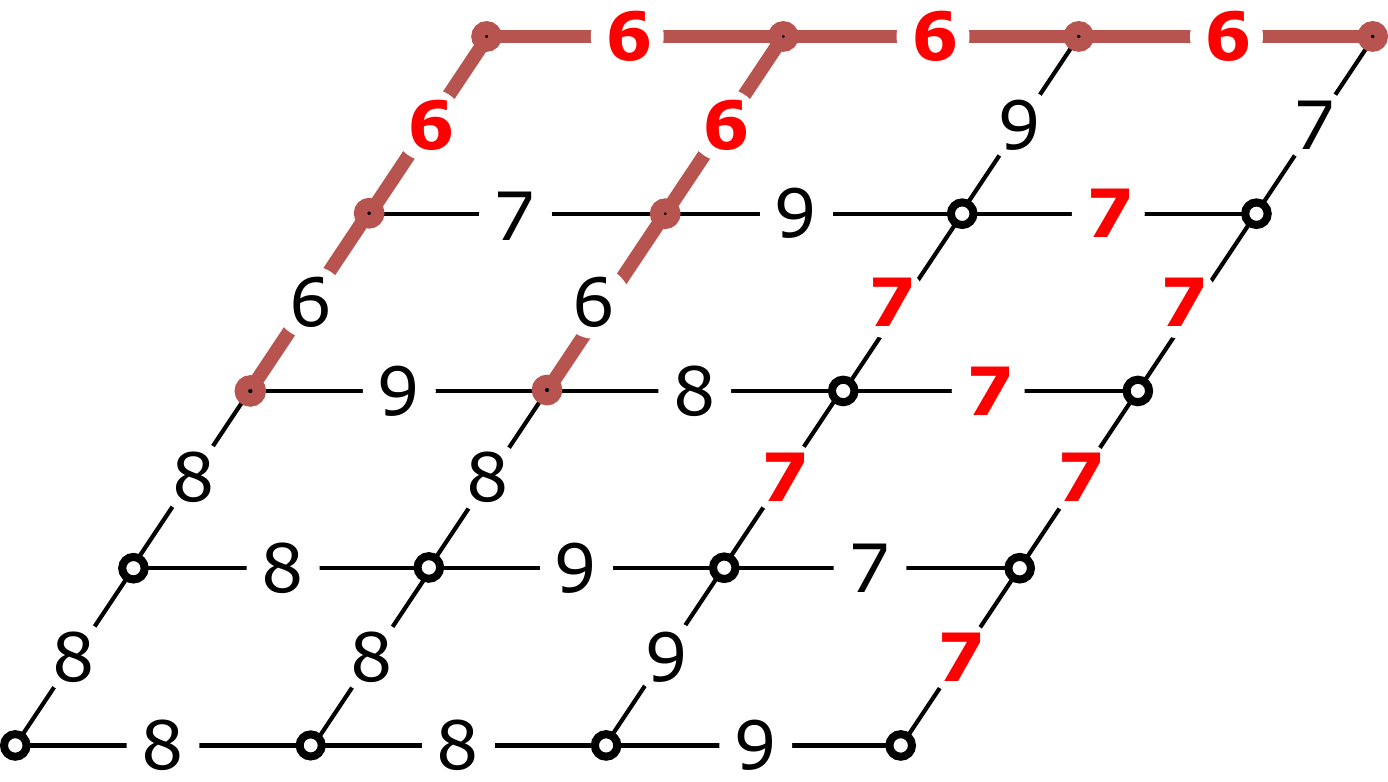}}
   \end{tabular*}
   \subfigure[Hierarchical watersheds]{        \includegraphics[width=0.5\linewidth]{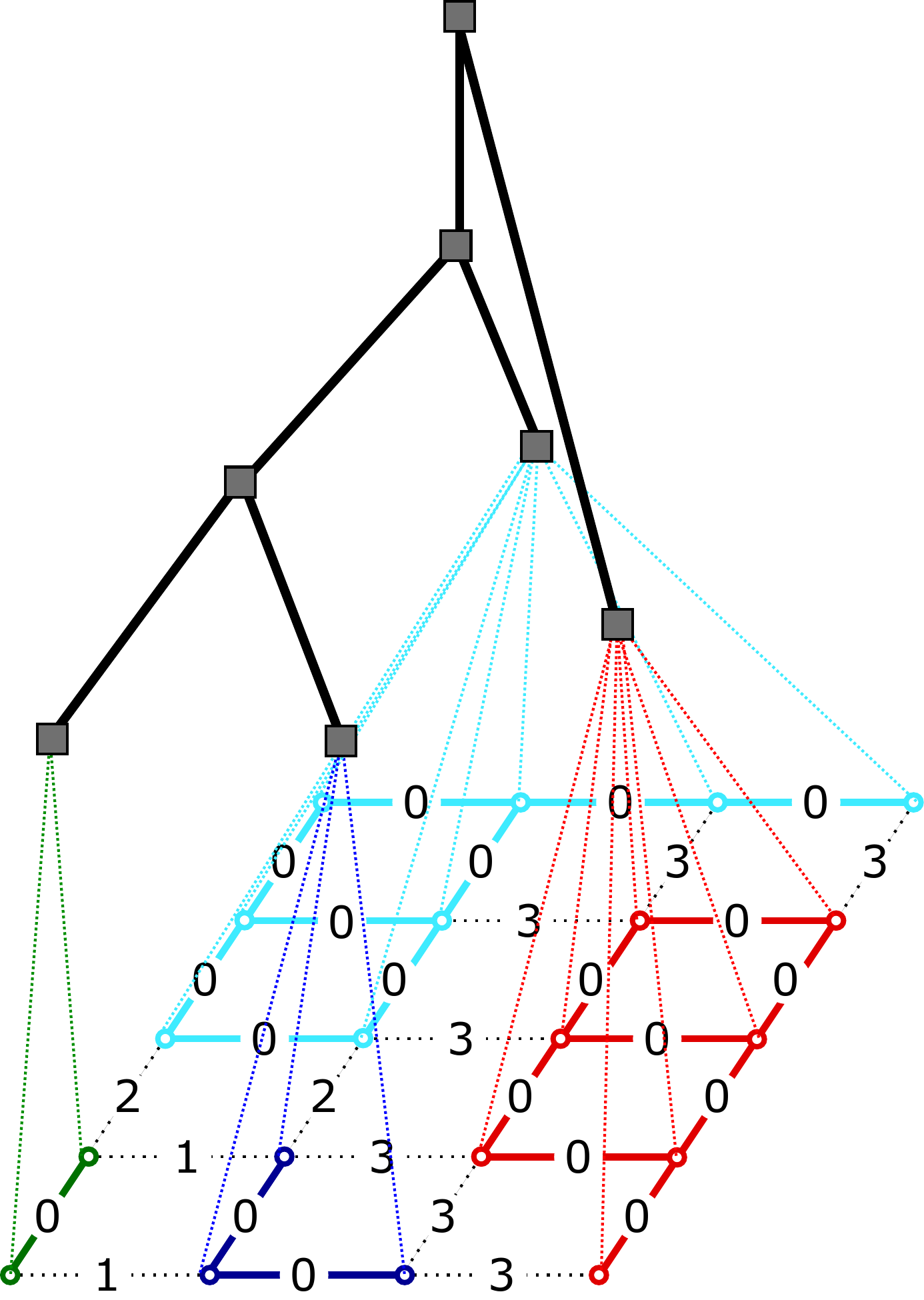}}
    \caption{ \label{fig:HierarchicalWC}  (a-d): Four filtrations of a graph obtained with a connected closing by area removing successively the components with less than 0, 3, 6, and 8 vertices. The edge weights that changed between two levels of filtrations are highlighted in red. The minima are in bold red. The watershed cuts of each filtration are represented by dashed edges. Observe that the watershed cut obtained at one filtration level is included in the watershed cut obtained at a lower level. (e): The watershed cuts of the series of filtrations form a hierarchical watershed. It can be represented as a saliency map (bottom), \ie{} an edge-weighted graph, where the weight of an edge indicates at which filtration level it ceases to be in the cut or 0 if it does not belong to any watershed cut. Note that every watershed cut from the series of filtration can be recovered by thresholding the saliency map. The four regional minima of the original graph are the four minima, all at level 0, of the saliency map (each depicted in a different color). The hierarchical watershed can be equivalently represented  as a tree (top) representing the fusion of regions along the filtration process.
}
  \end{center}
\end{figure}

As in the single-scale case, the hierarchical watershed segmentation problem is deeply related to the minimum spanning tree combinatorial optimization problem. This link relates to the extinction value of the minima. More precisely, for each minimum of the graph, we can associate an {\bf extinction value}~$\epsilon(M)$ which is the highest scale parameter~$k$ such that the minimum~$M$ ``remains" (\ie~$M$ is included in a minimum) in the filtration~$\Phi_k(G)$. Let us state the main result of the hierarchical watershed theory which establishes the multiscale optimality of the approach and suggests the algorithmic framework developed in this article.

\begin{thm}[multi-scale optimality]
\label{thm:multi-scaleOptimality}
Let~$\Phi = (\Phi_0(G)=G, \ldots, \Phi_N(G))$ be a filtration series of~$G$ and let~$\mathcal{H} = (C_0, \ldots, C_N)$ be a series of partitions which satisfies the hierarchy principle. If, 
\begin{itemize} 
\item[(i)]  for any value~$\lambda \in \{0, \ldots, N\}$, there exists a minimum spanning forest~$F_\lambda$ of $G$ which is rooted in the minima of~$G$ of extinction value (with respect to the filtration series~$\Phi$) greater than or equal to~$\lambda$ and whose connected component partition is~$C_\lambda$,
\end{itemize}
then
\begin{itemize} 
\item[(ii)] $\mathcal{H}$ is a hierarchical watershed with respect to the filtration~$\Phi$.
\end{itemize}
Furthermore, if~$\Phi$ is a one-by-one minima filtration series (\ie{} the number of minima decreases by one between any two successive filtrations), the converse is also true.
\end{thm}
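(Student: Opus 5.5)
The plan is to reduce both directions to the two single-scale results already stated, Theorem~\ref{thm:WCOptimality} and Theorem~\ref{thm:seededWC}. The bridge is one key lemma about extensive connected operators:

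\begin{quote}
For every $\lambda$, the regional minima of $\Phi_\lambda(G)$ are exactly the lower-threshold components of $\Phi_\lambda(G)$ containing the minima $M$ of $G$ with $\epsilon(M)\geq\lambda$. Each such minimum of $\Phi_\lambda(G)$ contains exactly one of these $M$. Moreover, $\Phi_\lambda(G)$ coincides with a closing by reconstruction of $w$ from these minima.
\end{quote}

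I would prove this lemma from the stacking construction $w'(u)=\min\{k \suchthat u \in C'(G,w,k)\}$. A component is kept at level $k$ exactly when its measure is at least $\lambda$. Because $\mu$ is increasing, the kept components form an upward-closed sub-family of the component tree of $w$. The minima of $w'$ are therefore the minimal kept nodes.

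The delicate point is to show that each minimal kept node contains exactly one surviving minimum. This is where the definition of extinction value must be used. The node cannot contain none, because any minimum it contains remains included in a minimum of $\Phi_\lambda(G)$. It cannot contain two, because surviving minima are not merged at that level.

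For (i)$\Rightarrow$(ii), fix $\lambda$ and take the forest $F_\lambda$ given by (i). It is an MSF of $G$ rooted in the surviving minima $\mathcal{M}_\lambda=\{M \suchthat \epsilon(M)\geq\lambda\}$. I would transport it to an MSF of $\Phi_\lambda(G)$ rooted in the regional minima of $\Phi_\lambda(G)$, then invoke Theorem~\ref{thm:WCOptimality} to conclude that $C_\lambda$ is a watershed cut of $\Phi_\lambda(G)$.

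The transport will use the cycle/cut characterization of minimum spanning forests rooted in a marker set. A forest is such an MSF iff every non-forest edge $\{x,y\}$ joining two components has weight at least the maximum weight on the forest path joining $x$ and $y$ through the markers. Equivalently, one can add a virtual root tied to all markers with weight $-\infty$.

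Since $\Phi_\lambda(G)$ is a closing by reconstruction, two facts hold.
\begin{itemize}
\item Edges outside the filled regions keep their weight.
\item Edges inside a filled region take the region's new constant level. That level is at most the weight of any edge leaving the region, and it equals a threshold level at which the region joins a surviving component.
\end{itemize}
With these, the minimax path values between vertices and the markers are the same for $w$ and for $w'$. I expect this to follow the argument behind Theorem~\ref{thm:seededWC}. Alternatively, I would simply cite that theorem with marker set $\mathcal{M}_\lambda$ and use that its notion of ``watershed cut for the marker set'' is precisely the watershed cut of the closing by reconstruction, which the lemma identifies with $\Phi_\lambda(G)$. The hierarchy principle is assumed in the statement, so (ii) follows.

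For the converse under a one-by-one filtration, fix $\lambda$. Theorem~\ref{thm:WCOptimality} gives an MSF $F'$ of $\Phi_\lambda(G)$ rooted in its minima, with partition $C_\lambda$. I would reverse the transport and turn $F'$ into an MSF of $G$ rooted in $\mathcal{M}_\lambda$. Inside each filled region, replace the edges of $F'$ by an MST of $w$ restricted to that region; the region is connected at its new level, so this is possible. Connecting edges of $F'$ keep their optimality because the minimax values are unchanged.

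The main obstacle is the exact bijection, in the lemma, between minima of $\Phi_\lambda(G)$ and the set $\mathcal{M}_\lambda$. When several minima disappear at once the extinction values tie, and a region of $\Phi_\lambda(G)$ may swallow a minimum of extinction exactly $\lambda$. The one-by-one hypothesis removes this ambiguity, since each step kills exactly one minimum, so extinction values are distinct and $\mathcal{M}_\lambda$ equals the set of minima of $\Phi_\lambda(G)$ restricted to $G$-minima. I would therefore first prove the lemma carefully in the one-by-one case. For the direction (i)$\Rightarrow$(ii) I only need the inclusion that each minimum of $\Phi_\lambda(G)$ contains at least one element of $\mathcal{M}_\lambda$, and that the partition is compatible with them. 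If that inclusion fails in general, I would check it against the hierarchy principle assumed in the statement.
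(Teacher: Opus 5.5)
\textbf{Forward direction.} This part is fine and is essentially the paper's. The paper transports the MSF $F_\lambda$ of $G$ rooted in $\mathcal{M}_\lambda$ to an MSF of $\Phi_\lambda(G)$ by citing Theorem~5.2 of Allène et al.\ (an MSF rooted in $\mathcal{M}$ for $\Phi_0$ remains one for $\Phi_1$), and then applies Theorem~\ref{thm:WCOptimality}. Your minimax transport, or the detour through Theorem~\ref{thm:seededWC}, plays the same role.

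Your reason why a minimal kept component cannot contain two surviving minima is not valid under the literal definition of extinction values. Two minima of $G$ merged inside one minimum of $\Phi_\lambda(G)$ both ``remain''. The paper tacitly assumes the same bijection between $\mathcal{M}_\lambda$ and the minima of $\Phi_\lambda(G)$, however, so this is a shared convention issue rather than your gap.

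\textbf{Converse direction.} Here there is a genuine gap. You argue scale by scale: take the MSF $F'$ of $\Phi_\lambda(G)$ given by Theorem~\ref{thm:WCOptimality} and ``reverse the transport''. This never uses the hierarchy principle, and the single-scale statement it would establish is false. It fails even for one-by-one filtrations with a perfect bijection between $\mathcal{M}_\lambda$ and the minima of $\Phi_\lambda(G)$.

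Take the path $a',a,b,c,d,e,f,f'$ with edge weights $0,0,5,1,5,0,0$, and an area closing of threshold $3$. Then $\Phi_1(G)$ raises the weight of $\{c,d\}$ from $1$ to $5$. The number of minima goes from three to two, and $\mathcal{M}_1$ consists of $M_1=\{a',a,b\}$ and $M_2=\{e,f,f'\}$.

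\begin{itemize}
\item The cut whose cut set is $\{\{c,d\}\}$ is a watershed cut of $\Phi_1(G)$, via the descending paths $(c,b,a)$ and $(d,e,f)$.
\item Any forest rooted in $\{M_1,M_2\}$ inducing this cut has $w$-weight $10$, whereas cutting $\{b,c\}$ instead costs $6$.
\end{itemize}

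Your replacement step fails exactly here. The filled region $\{c,d\}$ is split between two components of $F'$, so it cannot be re-spanned by an MST of $w$ without changing the partition. Moreover, the equality of vertex-to-marker minimax values for $w$ and $w'$ does not control the cycle condition for the non-forest edge $\{c,d\}$, whose original weight is low. What rules this cut out is nestedness with $C_0$. So the one-by-one hypothesis is not only there to settle ties in extinction values.

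\textbf{What the paper does instead.} It proceeds by induction on $i$ and exploits the hierarchy.
\begin{itemize}
\item Start from $F_{i-1}$, whose partition is $C_{i-1}$. The one-by-one property and nestedness give a unique region $B=A_1\cup A_2$ of $C_i$, formed from two regions of $C_{i-1}$, with the disappearing minimum in $A_1$.
\item Add a $\Phi_0$-lightest edge $u$ between $A_1$ and $A_2$. Via Lemmas~\ref{lem:1} and~\ref{lem:2}, it only remains to show $\Phi_0(v)\geq\Phi_0(u)$ for every edge $v$ leaving $A_1$.
\item If $v$ leaves $B$, the drop-of-water path for $\Phi_i$ from its endpoint in $A_1$ to the minimum in $A_2$ crosses from $A_1$ to $A_2$ through some edge $u'$. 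Then $\Phi_0(v)=\Phi_i(v)\geq\Phi_i(u')\geq\Phi_0(u')\geq\Phi_0(u)$.
\end{itemize}

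The first equality is Lemma~\ref{lem:3}: in a hierarchical watershed for a one-by-one filtration, cut edges of $C_i$ keep their original weight. In other words, the cuts never pass through filled plateaus. That lemma, proved from the hierarchy principle and the one-by-one hypothesis, is the ingredient your proposal is missing.
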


The proof of Th.~\ref{thm:multi-scaleOptimality} is provided in Appendix~\ref{sec:ProofTheo4}.

Besides establishing the optimality of hierarchical watershed cuts, the previous theorem provides us with the following important observations:
\begin{itemize}
\item it is not necessary to actually compute the successive filtrations of~$G$ and to extract the watershed of these filtrations, wishing them to satisfy the hierarchy principle,  
\item instead, one only needs to know the extinction values of every minimum and to compute nested minimum spanning forests of the original graph~$G$.
\end{itemize}

In the following, we will use the latter approach, showing how to
\begin{itemize}
    \item compute extinction values of  minima (or more generally seeds); and
    \item compute successive minimum spanning forests based on Kruskal's minimum spanning tree algorithm and on the associated BPTAO.
\end{itemize}

\subsection{Watershed cuts from a hierarchy}
\label{sec:seeded_hierarchical_watershed}

In the previous subsection, we have seen that the watershed cuts may be used to define multiscale image representation. On the other hand, in Section~\ref{sec:seededWatershedCut}, we have shown how watershed can be used to extract a segmentation directly from an image based on user (or machine learning) supervision given in the form of seed vertices indicating clues on objects of interest location. In this subsection, we explain how watershed cuts can be used to extract a segmentation from a hierarchy and a seed-based supervision. The core idea is to reduce the latter extraction to the former one. This can be done with saliency maps (\cite{najman-schmitt_pami1996,cousty-et-al_jmiv2018}, a notion also known as ultrametric contour map \cite{arbelaez_cvprw2006}), which is an equivalent representation of a hierarchy in the form of an edge-weighted graph that can be seen as the characteristic function of the hierarchy.

Intuitively, the {\bf saliency map} of a hierarchical watershed cut is a new weight function for the edges of the graph~$G$ such that the weight of each edge indicates the highest scale for which the two extremities of that edge belong to distinct regions. In other words, at any scale below the saliency weight of an edge, its two extremities lie in distinct regions of the hierarchy, whereas at larger scales they belong to the same region. In \Xfig{fig:HierarchicalWC}(e), one can see the saliency map of the presented hierarchical watershed. It has to be noted that the saliency map is an equivalent representation of a hierarchical watershed. Using notions from the classification community, the saliency weight of an edge is given by the distance between its extremities, where the distance is the ultrametric  \cite{leclerc_1981} associated to the hierarchical watershed.

In the remaining part of this article, {\bf seeded hierarchical watershed cuts} refer to the results of the following supervised segmentation scheme:
\begin{enumerate}
\item compute a hierarchical watershed cut $(C_0, \ldots, C_N)$ of~$G$;
\item compute the saliency map~$w_s$ of~$(C_0, \ldots, C_N)$;
\item compute the watershed cut of the saliency-weighed graph~$(V,E,w_s)$ for the given marker set. 
\end{enumerate}

\section{Algorithmic scheme overview}
\label{sec:algoOverview}

In this section, we present an overview of {\color{black}the proposed framework to produce watershed-cut segmentations from a given image, which comprises three main steps: (i) construction of the Binary Partition Tree by Altitude Ordering (BPTAO), that allows one to efficiently browse the minimum spanning tree edges of the graph; (ii) detection of the edges of the MST where to cut in order to obtain the segmentation, with the possible scale at which the corresponding cut occurs; these edges are detected by searching particular nodes of the BPTAO; (iii) computation of the segmentation regions given by the cuts.}

{\color{black} Our framework is based on the link between BPTAO and MSTs, which have already been studied in \cite{cousty-et-al_ismm2013}, with efficient algorithms to compute BPTAO proposed in \cite{najman-et-al_ismm2013} and implemented in \cite{PCCGKN:softwarex2019}. In simple terms, the BPTAO is a tree-based hierarchical data structure used to represent an MST and can be seen as a by-product of the efficient Kruskal’s MST algorithm. Figure \ref{fig:bpt_mst} shows the BPTAO of an edge-weighted graph. In particular, the leaves of the BPTAO are associated to the vertices of the MST and each non-leaf node is associated to an edge of the MST. Intuitively, we can say that every non-leaf node of the BPTAO represents the addition of an edge in a non-decreasing order of their weights (indicated by the red boxes in Figure \ref{fig:bpt_mst}) to the MST during Kruskal’s algorithm, where the added edge is used to merge the two connected regions that contain the edge extremities.}

\begin{figure}
    \centering
    \includegraphics[width=1.0\linewidth]{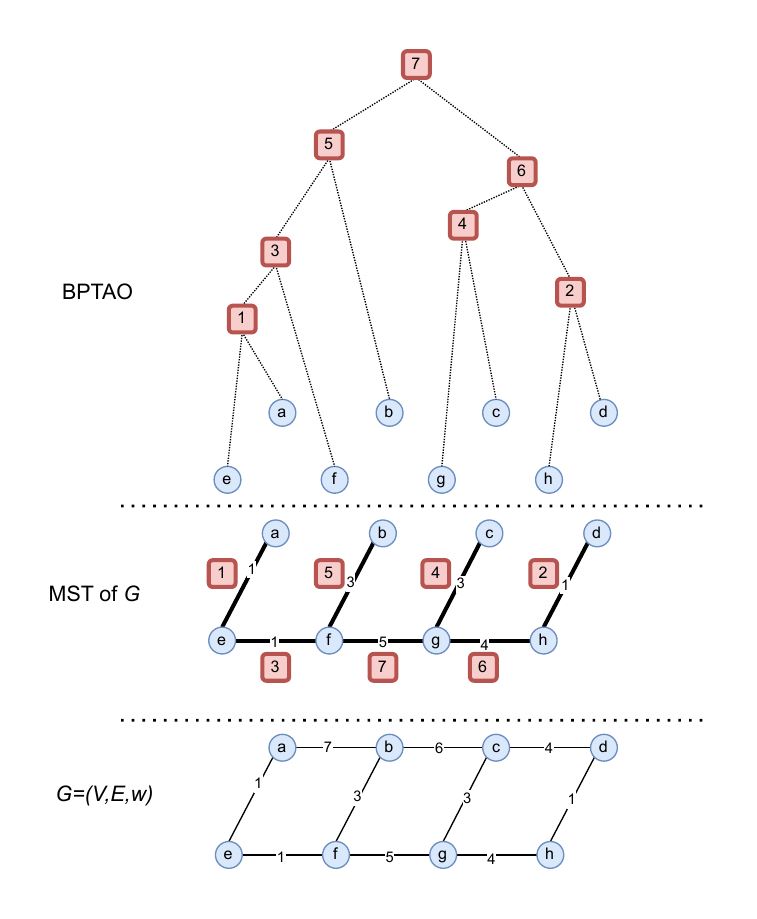}
    \caption{\color{black} A weighted graph $G=(V,E,w)$, a MST of $G$ with a non-decreasing ordering on its edge weights indicated in red, and the BPTAO of this MST for this given ordering.}
    \label{fig:bpt_mst}
\end{figure}

The overall steps of our algorithm are illustrated in Figure \ref{fig:schema_section3} and described below.

\begin{figure*}[!htbp]
    \centering
    \includegraphics[width=0.6\textwidth]{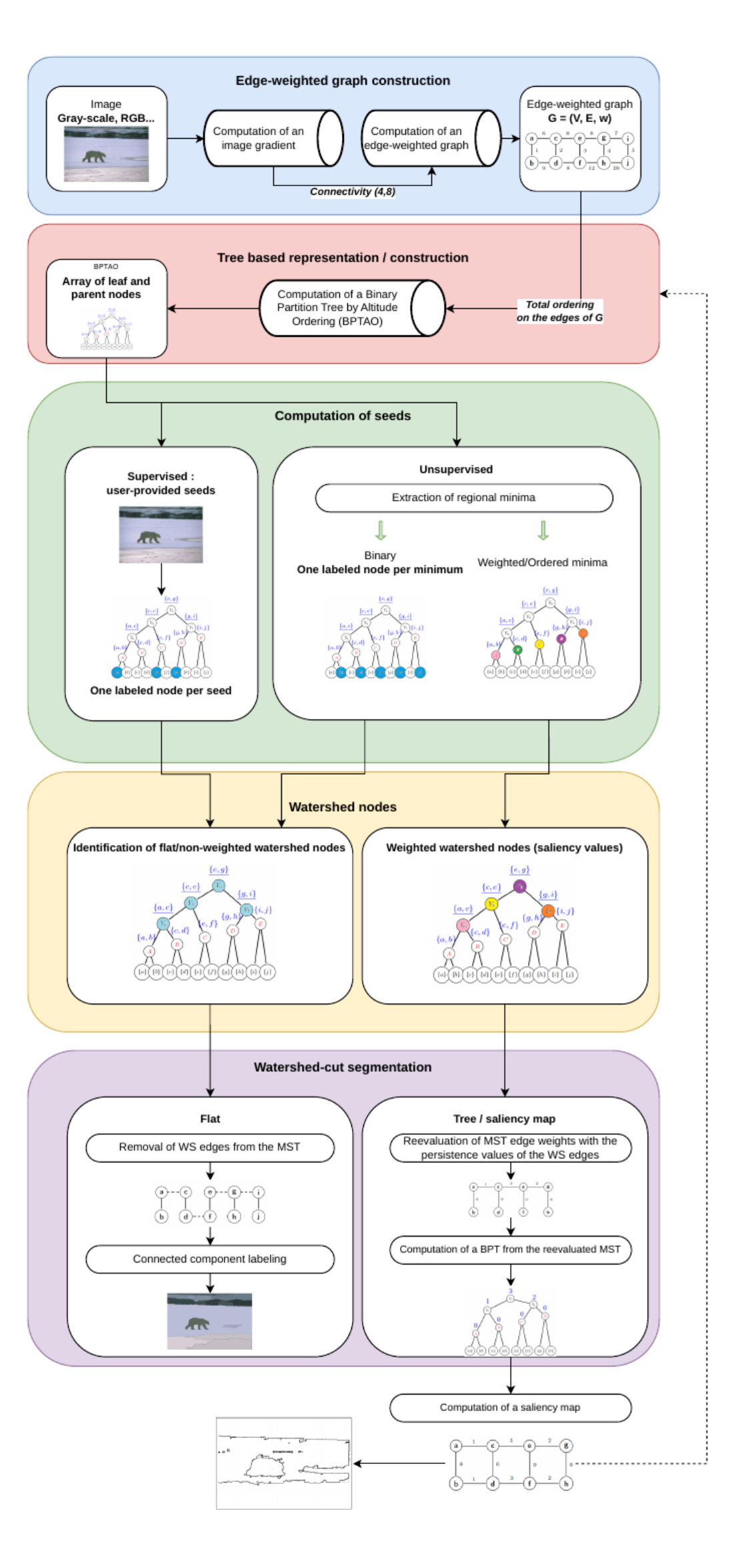}
    \caption{Algorithmic overview of (hierarchical) watershed-cut segmentation.}%
    \label{fig:schema_section3}
\end{figure*}

\begin{enumerate}

    \item {\bf Computation of an edge-weighted graph $G=(V,E,w)$ for an input image $I$.} As shown in Section \ref{sec:bpt}, BPTAOs are computed from (connected edge-)weighted graphs, {\color{black}commonly obtained by setting $V$ as the set of pixels of $I$ and $E$ as edges between the 4- or 8-connected pixels.}
    \item {\bf Construction of a BPTAO from $G=(V,E,w)$ (Section \ref{sec:bpt}).} In simple terms, the nodes of the BPTAO of a given weighted graph $G$ correspond to the connected components produced during Kruskal’s minimum-spanning-tree algorithm, the leaf nodes being the vertices of $G$. Given a set $V$ of vertices (leaf nodes) and a total ordering $\prec$ on $E$, we compute a sequence of increasingly coarser partitions of $V$ by incrementally merging the regions linked by the edges $e_1, e_2, ..., e_{\|E\|}$ (ordered according to $\prec$) until a single region has been obtained.
    \item {\bf Selection of seeds/markers (Section \ref{sec:selection_seeds}).} The construction of (hierarchical) watershed segmentations is guided by seeds indicating {\color{black}where and} how many regions (or hierarchical levels) should be present in the final segmentation. Those seeds can be obtained through unsupervised or supervised methods, with the set of (ordered) regional minima of $G$ and seeds manually input by the user (drawn from the original image) being among the common choices. {\color{black} Namely, in the case of hierarchical watersheds, the set of regional minima is often ordered according to their ``importance level'' for a given regional attribute, such as the area (proportional to the number of pixels in a region) or volume (proportional to the number and intensity of pixels in a region), which is commonly measured by extinction values \cite{vachier-meyer_nsip1995}. In turn, extinction values can also be obtained in linear time from the BPTAO, as described later in Section \ref{subsection:seeds_hierarchical}.}
    \item {\bf Computation of watershed nodes of BPTAO or saliency values of the MST edges (Section \ref{sec:extraction_watershed_cut})}
    \begin{itemize}
        \item {\bf Non-hierarchical case:} once the BPTAO has been constructed, the {\color{black}watershed-cut edges} of a MST of $G$ can be obtained in a single pass from the leaf nodes to the root of BPTAO. Those {\color{black}watershed-cut edges} establish a partition on the vertices of $G$. %
        \item {\bf Hierarchical case:} in the hierarchical version of watersheds, where weighted or ordered markers are given as input, we also need to obtain the saliency values of watershed-cut edges, from which a nested sequence of partitions of $V$ can be derived. In this case, the saliency values play a role in specifying at which level of the hierarchical watershed the nodes connected by those edges should be merged. %
    \end{itemize} 
    \item {\bf Computation of connected components corresponding to a (hierarchical) watershed of $G$ for the given seeds $S$. (Section \ref{sec:extraction_cc})} Once the watershed nodes and, possibly, their saliency values, have been computed, the following step computes the (hierarchical) watershed segmentation of $S$, which consists of a (hierarchical set of nested) partition(s) of $V$. In the non-hierarchical case, if we are rather interested in the contours separating those connected components, a cut of $G$ can be later computed in linear time in a single pass on the set $E$. In the hierarchical case, {\color{black}the saliency map of the MST can be obtained in linear time with respect to $\cardinal{E}$ using Algorithm \ref{alg:hwatersheds_seeds} introduced later in Section~\ref{sec:hierarchical_watershed}}. %
\end{enumerate}

{\color{black}In the remaining part of this article, we assume that seeds are composed of a single pixel. Moreover, in our algorithms, we will not consider cases where a single region in the watershed segmentation contains multiple non-connected seeds. Though those cases may happen in practice, they can be handled as a post-processing step by merging neighboring regions labeled by the same seeds.}

\section{Building a Binary Partition Tree}
\label{sec:bpt}

The Binary Partition Tree by Altitude Ordering (BPTAO) of an edge-weighted graph $G=(V,E,w)$ can be efficiently built by a modified version of Kruskal's minimum spanning tree (MST) algorithm. 
Indeed, Kruskal's MST algorithm proceeds as follows:
\begin{enumerate}
    \item At the start, the vertices of the graph are partitioned into singletons and the (incomplete)  MST is empty;
    \item Then, the edges of the graph are browsed in increasing weight order. Each time we find an edge $\mset{x,y}$ linking two vertices $x$ and $y$ belonging to two different regions, this edge is added to the MST and the two regions containing $x$ and $y$ are merged.
    \item When all the vertices have been merged into a single region, the MST is complete, and the algorithm stops.
\end{enumerate}
Thus, each step of the algorithm is associated with a partition of the vertices of the graph and the partition of step $n+1$ is obtained by merging two regions of the partition of the step $n$ (through an edge of the MST): in other words, the partition in step $n$ is a refinement of the partition at step $n+1$. The BPTAO is indeed exactly equal to this sequence of partitions\Cite{cousty-et-al_ismm2013} and one way to obtain it is to \emph{record} the region merges during the execution of Kruskal's algorithm. This process is illustrated in Figure~\ref{fig:BPTAO}.

\newcommand{\taille}{0.45}
\begin{figure}[tb]
  \begin{center}
    \begin{tabular*}{1\linewidth}{@{\extracolsep{\fill}}c c}
      \subfigure[Edge-weighted graph]{
        \includegraphics[width=\taille\linewidth]{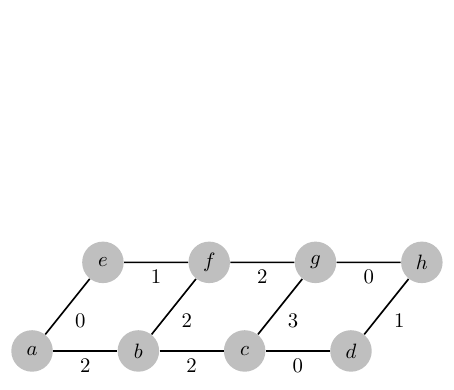}}&
      \subfigure[First nodes of the BPTAO]{
        \includegraphics[width=\taille\linewidth]{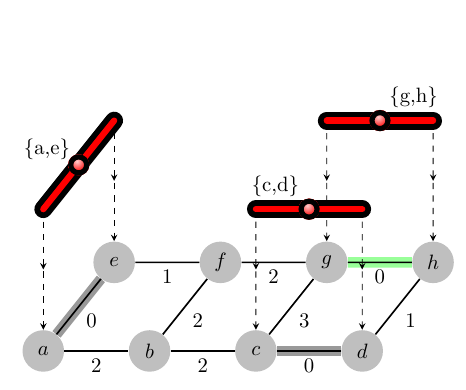}}\\
      \subfigure[Adding a node to the BPTAO]{
        \includegraphics[width=\taille\linewidth]{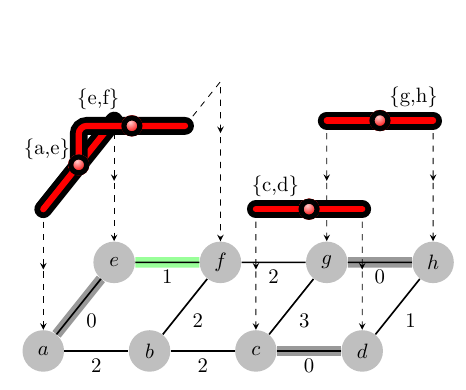}}&
      \subfigure[Final BPTAO]{
        \includegraphics[width=\taille\linewidth]{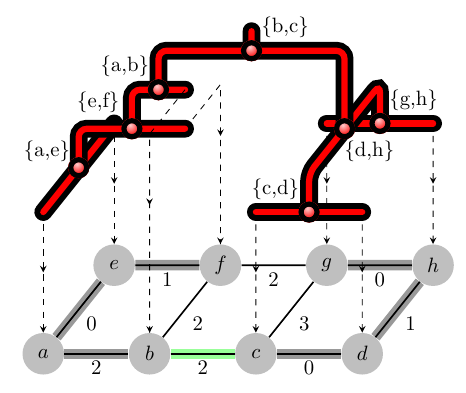}}\\
    \end{tabular*}
    \caption{\label{fig:BPTAO} Some steps of the construction of the MST of an edge-weighted graph with Kruskal's algorithm and the corresponding BPTAO. Top left: initial edge-weighted graph: each vertex of the graph represents a region. Top right: the algorithm first processes the edges $\mset{a,e}$, $\mset{g,h}$ and $\mset{c,d}$ of weight 0 in any arbitrary order. The vertices corresponding to those edges are merged: this is represented by the red trees whose root nodes indicate the edge of the MST where the merge occurred. Bottom left: then the edge $\mset{e,f}$ is processed. The region $\mset{a,e}$ is merged with the singleton $\mset{f}$. Bottom right: the process continues until the edge $\mset{b,c}$ is processed, completing the MST and leading to a partition with a single region containing all the vertices.}
  \end{center}
\end{figure}

Kruskal's algorithm relies on a union-find data structure to represent the current partition of the graph vertices, and, in order to be efficient, one must use union-find with path compression and rank optimization which collapses region merging sequences: one must thus record the merge sequence in a separate data-structure.

This is done in Algorithm \ref{alg:bpt}, which implements the construction of the BPTAO based on Kruskal's algorithm. In this algorithm, we assume that the graph vertices $V$ are represented by integers from 0 to $\cardinal{V}-1$. The nodes of the constructed BPTAO will also be represented by integers from 0 to $2\cardinal{V}-1$: the $\cardinal{V}$ first nodes of this tree are indeed exactly the $\cardinal{V}$ vertices of the graph. 
The algorithm relies on a standard Tarjan's union-find data structure with path compression and union-by-rank; a possible implementation is given in Algorithm \ref{alg:union_find} for reference. 
We can recognize the global structure of Kruskal's algorithm with two new data-structures:
\begin{itemize}
    \item An array \texttt{parents} of size $2\cardinal{V}-1$: This array records the region merging operations, represented as a parent relation. For any node $i$ of the tree, \texttt{parents[i]} is the parent node of $i$ in the tree. During the execution of the algorithm, this array will indeed represent a forest of trees.
    \item An array \texttt{roots} of size $\cardinal{V}$ that maps the canonical elements of the regions in the union-find data structure to the roots of the trees in the forest represented by the array \texttt{parents}. For any vertex $v$ of the graph, \texttt{uf.find(v)} is the canonical element representing the region containing $v$ and \texttt{roots[uf.find(v)]} is the root of the tree representing the merging sequence that leads to this region. 
\end{itemize}
Note that a more detailed description of this algorithm is given in~\cite{najman-et-al_ismm2013} with an incremental transformation of Kruskal's algorithm.

\SetKwProg{Fn}{Function}{}{end}

\begin{algorithm2e}[tb]
\small
\caption{Binary partition tree by altitude ordering (BPTAO)}\label{alg:bpt} %
\KwData{An edge-weighted graph $G=(V,E,w)$}
\KwResult{Relation \texttt{parents} and MST \texttt{MSTEdges}}
numMSTEdges := 0\; %
MSTEdges := array of size $\cardinal{V}-1$\; %
UnionFind uf\; %
parents := array of size $2\cardinal{V}-1$\; %
roots := array of size $\cardinal{V}$\; %

\ForEach{vertex $v\in V$}{
uf.makeSet($v$)\; %
roots[v] := v\;%
}
\tcp{Modified Kruskal's MST algorithm}
\ForEach{ edge $\{i,j\}\in E$ by increasing weight $w(\{i,j\})$}{
    $c_i$ = uf.find($i$); $c_j$ = uf.find($j$) \;
    
    \If{$c_i \neq c_j$}{
        MSTEdges[numMSTEdges] := $\{i,j\}$\; %
        newRoot := uf.link($c_i$, $c_j$)\;%
        numNode := $\cardinal{V}$ + numMSTEdges\; %
        parents[roots[$c_i$]] := numNode\; %
        parents[roots[$c_j$]] := numNode\; %
        roots[newRoot] := numNode\;  %
        numMSTEdges += 1\; %
    }
}
parents[$2\cardinal{V}-2$] = $2\cardinal{V}-2$\tcp*{convention: parent of root is itself}
\end{algorithm2e}

\begin{algorithm2e}
\small
\caption{Union-Find}
\KwData{\texttt{parents} and \texttt{ranks} are two mappings shared by the 3 functions.}
\label{alg:union_find}

\BlankLine
\Fn{\textnormal{\texttt{makeSet}($x$)}}{
    \tcp{Create a new set containing $x$}
    parents[$x$] := $x$; ranks[$x$] := 0\;
}

\BlankLine
\Fn{\textnormal{\texttt{find}($x$)}}{
    \tcp{Finds the canonical element of the set containing $x$}
    \If{parents[$x$] $\neq x$}{
        parents[$x$] := find(parents[$x$]) \;
    }
    \Return parents[$x$]\;
}

\BlankLine
\Fn{\textnormal{\texttt{link}($c_x, c_y$)}}{ 
    \tcp{Merges the sets represented by the canonical elements $c_x$ and $c_y$}
    \If{ranks[$c_x$] $>$ ranks[$c_y$]}{
            swap($c_x$, $c_y$)\;
        }
    \If{ranks[$c_x$] $=$ ranks[$c_y$]}{
            ranks[$c_y$] += 1\;
        }
    parents[$c_x$] := $c_y$\;
    \KwRet $c_y$\;
}
\end{algorithm2e}

After the declaration of local variables (lines 1--5), the loop over each graph vertex $v$ (lines 6--9) creates a singleton $\mset{v}$ and defines that $v$ is the root of the sub-tree associated with this singleton. Then, we have the main loop of Kruskal's algorithm, where edges are considered in increasing order. For each edge $\mset{i,j}$, we first identify the two regions $c_i$ and $c_j$ containing $i$ and $j$ (line 11). 
If those regions are different (line 12), $\mset{i,j}$ is part of the MST (line 13) and we merge the 2 regions $c_i$ and $c_j$ (line 14). 
We then record the merge operation by creating a new node in the forest (line 15). This new node becomes the parent of the sub-trees associated with the regions $c_i$ and $c_j$ (lines 16--17) and the root of the sub-tree associated with the merged region is the new node (line 18). Finally, the number of edges in the current MST is incremented. 

Similar to Kruskal's algorithm, Algorithm~\ref{alg:bpt} runs in quasi-linear time $O(\cardinal{E} × \alpha(\cardinal{V}))$ (where $\alpha$ is the inverse of the single-valued Ackermann function \cite{tarjan1975efficiency}) plus the time needed to sort the edge of the graph: $O(\cardinal{E}\log(\cardinal{E}))$ in the general case or $O(\cardinal{E})$ if linear sorting is possible.

With the BPT already computed, all remaining algorithms in this paper require a fixed number of passes over this data structure, running in linear time relative to the number of nodes of the input BPTAO.

Note that at the end of Algorithm~\ref{alg:bpt}, the nodes in the \texttt{parents} array representing the tree are sorted in topological order from the leaves to the root. This means that browsing the tree nodes in (reverse) topological order is as simple as a {\bf for}-loop over the array. In the following, given a binary partition tree $T$ and a node $n$ of $T$ we will use the following notations:
\begin{enumerate}
    \item $n.par$ is the parent of $n$ (if $n \neq$ root);
    \item $n.mst\_edge$ is the MST edge associated to $n$ (for non-leaf nodes)
    \item $n.alt$ is the altitude of $n$, \emph{i.e.} the weight of $n.mst\_edge$, the MST edge associated to this node for internal nodes and 0 otherwise;
    \item $n.c1$ is the first child of $n$ (for non-leaf nodes);
    \item $n.c2$ is the second child of $n$ (for non-leaf nodes).
\end{enumerate}

\section{Selection of seeds}
\label{sec:selection_seeds}

Once the BPTAO of the input graph has been computed, the selection of seeds, which will determine the nature ({\em flat} or {\em hierarchical}) and the number of regions of the final segmentation, can be performed through various methods.

This section describes the most common methods for extracting seeds. For the sake of clarity, we consider the non-hierarchical and hierarchical cases separately.

\subsection{Seeds for non-hierarchical watershed cuts}
\label{sec:seedminima}

\begin{algorithm2e}[tb]
\small
\caption{Minima}\label{alg:minima_ann} %
\KwData{A BPTAO \texttt{T}}
\KwResult{A binary array \texttt{minima} indicating which nodes of \texttt{T} are minima.}
\ForEach{leaf node $n\in T$}{
    numMin[n] :=0\;
    minima[n] := False\;
}
\ForEach{node $n$ of $T$ from leaves (excluded) to root (excluded)}{
    numMin[n] = numMin[n.c1] + numMin[n.c2]\;
    \If{numMin[n] = 0  \texttt{and}  n.alt $\neq$ n.par.alt  }{
        minima[n] := True\;
        numMin[n] := 1\;
    }
}
\If{numMin[T.root] = 0}{
    minima[T.root] := True\;
}
\end{algorithm2e}

\begin{algorithm2e}[tb]
\small
\caption{MinimaOneLeaf}\label{alg:minima_one_leaf_ann}
\KwData{A BPTAO \texttt{T}}
\KwData{A binary array \texttt{minima} indicating which nodes of \text{T} are minima.}
\KwResult{A binary array \texttt{minVertex} which is True for one and only one leaf node in each minimum.}
\ForEach{node $n$ of $T$ from root to leaves (excluded)}{
    \If{minima[n]}{
        minima[n.c1] := True\;
    }
}
\ForEach{leaf node $n$ of $T$}{
    minVertex[n] = minima[n]\;
}
\end{algorithm2e}

\begin{itemize}
    \item {\bf Regional minima.} Placing the seeds on the graph's regional minima stems from the original definition of watershed, with the resulting graph-cut being compatible with the drop of water principle \cite{cousty-et-al_pami2008}. %
    
    An algorithm to compute the regional minima of a graph is presented in Algorithm~\ref{alg:minima_ann}. 
    This algorithm processes a BPTAO to identify nodes that represent minima, outputting a binary array indicating these nodes. It starts by initializing leaf nodes with no minima. Then, in a bottom-up traversal from leaves to the root (excluding the root), it calculates the number of minima in each node’s subtree by summing those of its children. A node is marked as a minimum if its subtree has no minima and its altitude value differs from its parent’s. Finally, the root is marked as a minimum if the entire tree has no minima. 
    Algorithm \ref{alg:minima_one_leaf_ann} (MinimaOneLeaf) processes a BPTAO and a \texttt{minima} array (precomputed by Algorithm~\ref{alg:minima_ann}) to produce a binary array \texttt{minVertex}, which marks exactly one leaf node as \texttt{True} for each minimum node in the tree. Starting from the root, it propagates the minima status of each minimum node to its first child in a top-down traversal, continuing until a leaf is reached. Then, it assigns \texttt{minVertex[n] = minima[n]} for each leaf, ensuring only the leaf reached by this propagation is marked \texttt{True} for each minimum. This approach selects a single representative leaf for each significant region (minimum) in the tree, useful for applications like image segmentation.

    \item {\bf User scribbles.} {\color{black}For certain applications, users can actively guide the segmentation algorithm by roughly drawing scribbles inside the objects of interest. However, this technique can quickly become cumbersome as the image complexity and size increase. To alleviate this problem, hybrid methods often couple user's input with automatic segmentation, such as the one presented in \cite{straehle2012seeded}, in which the users focus on the locations of ``high uncertainty'' and can interfere in case of segmentation errors.}

    \item {\bf Supervised methods.} Taking advantage of the deep learning developments {\color{black}in the last decade, Song et al.~\cite{song2018seednet} and Da Fonseca et al.~\cite{da2021new} propose supervised models for predicting user's interactive seed or scribble inputs. Given a first point in the object of interest and a point in the background, both provided by the user, Song et al.~\citep{song2018seednet} proposes a reinforcement learning model for generating a sequence of artificial user inputs in order to improve the precision of the resulting segmentation. In
    \citep{da2021new}, the authors trained a model to predict what an average user's scribble would look like and, then, input those learned scribbles into the watershed segmentation pipeline. Both methods can save user's time and could be used in conjunction with interactive segmentation methods to fix eventual segmentation errors. More recently, Yang and Gong~\cite{yang2024foundation} employed the pre-trained {\em Contrastive Language-Image Pre-training} and {\em Segment Anything Model} to generate high-quality seeds for weakly supervised semantic segmentation.
    Finally, another way of tackling seed based segmentation with the help of supervised learning methods is to, instead of learning where seeds should be placed, learn high level image contours such that the local minima are placed in the regions of interest. The idea is to leverage the pre-trained deep learning methods trained on large semantic segmentation datasets ({\em e.g.} \cite{mottaghi2014role}) to extract semantic knowledge from new input images in the form of fuzzy contours. }%
\end{itemize}

\subsection{Weighted seeds for hierarchical watershed cuts}

\label{subsection:seeds_hierarchical}

So far, we have considered non-weighted seeds with the aim of computing a flat watershed-cut segmentation, with the only parameters being the number of seeds and their location. By introducing weighted (or totally ordered) seeds, we can obtain a sequence of nested segmentations seeded in the $i$ most important seeds, for $i$ ranging from one (the coarsest segmentation level composed of a single region) to $n$ (the finest segmentation level where each seed belongs to a distinct region). The seed weights can be interpreted as their ``importance level'', where seeds with larger weights belong to the ``most important image regions'' which should be preserved at the coarsest segmentation levels. 
The main motivation for considering weighted seeds is that regions of interest are rarely present at a single segmentation level, and post-processing is often required to filter out spurious noise.

{\bf Extinction values}, as already mentioned in Section \ref{sec:subsection_hierarchical_watershed_cuts}, are a common measure for ordering the regional minima of a graph. Informally, the idea is that each regional minimum belongs to a catchment basin from which some simple properties can be extracted, including their area (number of pixels/vertices), volume (sum of pixels' intensities) and dynamics (measure of contrast between the minima).

In order to obtain weighted seeds, we first compute increasing regional attributes ({\em e.g.} area, volume or depth) on the BPTAO nodes, which are then propagated from the root to the leaf nodes, resulting in the extinction value of each minimum (as well as their ancestors in the BPTAO).

Algorithm \ref{alg:ev} describes the computation of extinction values given a BPTAO and an increasing attribute as input. Algorithm \ref{alg:ev} is linear w.r.t. $\cardinal{V}$ and can be summarized as:

    \begin{itemize}
        \item First we need to correct the attribute values because some nodes of the BPTAO, those which correspond to sub-parts of a plateau, do not really represent any real component: they are artifacts of the binary tree structures. This situation is detected by checking if the altitude of a node $n$ is equal to the altitude of its parent. In this case, its attribute value is replaced by the maximum of the attribute values of its children, \emph{i.e.} in a filtering process, this component would disappear as soon as both its children are filtered out.
        \item Then comes the extinction value computation. We start by assigning $\infty$ (or any large enough value) as the extinction value of the root node. Next, for each node $n$ from the root to the leaf-nodes (excluded), let $maxA$ be the maximum of the attribute values of the children of $n$. Then, for each child $c$ of $n$,  the extinction value of $c$ is equal to the extinction value of $n$ if its attribute value is equal to $maxA$, and the attribute value of $c$ otherwise.
    \end{itemize}

Finally, to obtain weighted seeds from extinction values, one can simply first extract binary seeds representing the minima with Algorithms~\ref{alg:minima_ann} and~\ref{alg:minima_one_leaf_ann} and then replace the seeded minima vertices by their extinction values computed with Algorithm~\ref{alg:ev} (and non-seeded vertices are assigned a null weight).

\begin{algorithm2e}[tb]
\small
\caption{extinctionValue}\label{alg:ev}
\KwData{A BPTAO \texttt{T}}
\KwData{An array \texttt{attr} representing an increasing attribute of \texttt{T}.}
\KwResult{An array \texttt{ext} representing the extinction values of the nodes of \texttt{T} for \texttt{attr}.}
\tcp{Correct attribute values for non-canonical nodes of $T$}
\ForEach{node $n$ of $T$ from leaves (excluded) to root (excluded)}{
    \If{n.alt = n.par.alt}{
        attr[n] := max(attr[n.c1], attr[n.c2])\;
    }
}
\tcp{Compute extinction values}
ext[T.root] = $\infty$\;
\ForEach{node $n$ of $T$ from root to leaves (excluded)}{
    maxA := max(attr[n.c1], attr[n.c2])\;
    \eIf{attr[n.c1] = maxA}{ext[n.c1] = ext[n]\;}{ext[n.c1] = attr[n.c1]\;}
    \eIf{attr[n.c2] = maxA}{ext[n.c2] = ext[n]\;}{ext[n.c2] = attr[n.c2]\;}
}
\end{algorithm2e}

\section{Extracting watershed-cut edges from the BPTAO}
\label{sec:extraction_watershed_cut}
In this section, we show how to compute a (weighted) watershed cut from a BPTAO (see Section~\ref{sec:bpt}) and a set of (weighted) seeds (see Section~\ref{sec:selection_seeds}).

\subsection{Watershed-cut from seeds}
\label{sec:wat_cut_seeds}

We now present Algorithm \ref{alg:wsfromseed}, that computes a watershed from an arbitrary set of seeds. The main idea is to run through all the nodes of the BPTAO, and determine, for each node, if there is a seed below it. An edge belongs to the watershed cut if both children of its corresponding node in the BPTAO contain a seed.

Apart from the BPTAO, Algorithm \ref{alg:wsfromseed} requires a binary array \textsf{seeded},  which indicates whether each leaf node $n$ of the tree contains a seed. 
At the start of the algorithm, the array value is \texttt{True} for seeded leaves of the tree, and \texttt{False} elsewhere. 
Algorithm \ref{alg:wsfromseed} then browses the tree from leaves to root to determine which internal nodes have at least one seeded descendant. 
During a second pass, for any internal node $n$ of the tree (in any order), the MST edge corresponding to $n$ is marked as a watershed edge if both children of $n$ contain at least one seed, \emph{i.e.} if several marked regions merge at the node $n$. 
The algorithm returns a watershed cut represented by a binary array mapping each edge of the MST to \texttt{True} if the edge belongs to the cut and \texttt{False} otherwise.

\begin{algorithm2e}[tb]
\small
\caption{watershedFromSeeds}\label{alg:wsfromseed}
\KwData{A BPTAO \texttt{T}}
\KwData{A Boolean array \texttt{seeded} indicating which leaves of \texttt{T} are seeds.}
\KwResult{A Boolean array \texttt{ws} indicating which MST edges belongs to the watershed cut.}
\ForEach{node $n$ of $T$ from leaves (excluded) to root}{
    seeded[n] := seeded[n.c1] \textbf{or} seeded[n.c2]\;
}
\ForEach{non leaf node $n$ of $T$}{
    ws[n.mst\_edge] = seeded[n.c1] \textbf{and} seeded[n.c2]\;
}
\end{algorithm2e}

Fig. \ref{fig:wsfrombpt} shows the result obtained with Algorithm \ref{alg:wsfromseed} on a small example when one vertex of each minimum is seeded (see Section~\ref{sec:selection_seeds}). In this particular case, the cut obtained at the end of the algorithm is what is usually referred to as an (unsupervised) watershed cut of the underlying graph\Cite{cousty-et-al_pami2008}.

\begin{figure}[tb]
    \centering
    \includegraphics[width=.3\textwidth]{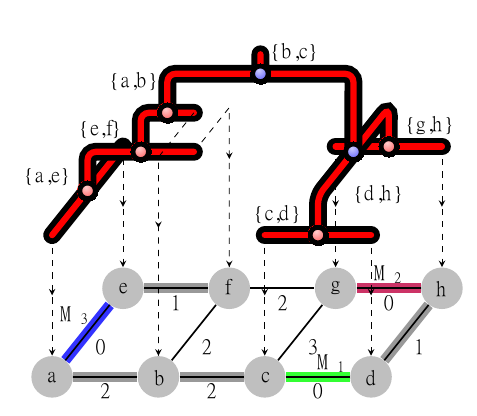}
    \caption{Unsupervised watershed cut. The underlying graph has 3 minima: edges $M_1$ (green), $M_2$ (red) and $M_3$ (blue). By applying Algorithm \ref{alg:wsfromseed} to the BPTAO  where one vertex of each minimum is seeded, we can identify the watershed nodes (blue points) and their associated MST edges $\{b,c\}$ and $\{d,h\}$.}
    \label{fig:wsfrombpt}
\end{figure}

\subsection{Hierarchical watershed cuts from weighted seeds}
\label{sec:hierarchical_watershed}

In the previous subsection, we have seen how watershed cuts can be obtained from a set of arbitrary binary seeds, which generalizes the fully unsupervised watershed cuts rooted in the minima of an input function. 
In the general case described above, all arbitrary seeds have the same ``importance level'', meaning that we are only interested in the final pixels/vertices partition derived from those given seeds. 
We can further generalize seeded watershed cuts by considering that the input seeds are weighted (or ordered) according to a given criterion, which leads to not only a single partition rooted in this set of seeds, but instead to a sequence of partitions rooted in any subset of the $n$ most important seeds ($n$ being less or equal to the total number of seeds). 
As stated in \cite{cousty-et-al_ismm2013}, in the framework of edge-weighted graphs, such sequence of partitions satisfying the causality principle can always be defined through minimum spanning forests rooted in each subset of $n$ most important input seeds.

As introduced in \cite{najman-et-al_ismm2013}, hierarchical watersheds cuts from seeds can be computed in quasi-linear time (given that the input edges are already sorted) thanks to the same framework described in the previous subsection, by post-processing the BPTAO. 
As shown in previous sections, watershed-cut edges can be detected in a single pass on the non-leaf nodes of the tree, resulting in a binary array indicating which MST edges are watershed-cut edges. 
In other words, it provides us with a binary weighted MST from which a watershed-cut segmentation can be obtained by simply computing its connected components. 
Now, given that an ordered set of seeds is provided, we aim at computing not only the set of MST watershed-cut edges, but also their saliency value, which indicates their \emph{level of disappearance} in the hierarchy. 
This can be achieved through a very simple adaptation of Algorithm \ref{alg:wsfromseed}. 

In Algorithm \ref{alg:hwatersheds_seeds}, given a BPTAO and a set of weighted seeds ({\em i.e.}, an array indicating for each leaf node $n$, the weight of the seed associated to $n$ or 0 if $n$ is not seeded), the result is an array $hws$ indicating the saliency values of the MST edges.
The algorithm itself is a direct generalization of Algorithm~\ref{alg:wsfromseed} for binary seeds: as classically done in mathematical morphology, binary operators \texttt{or} and \texttt{and} generalize to real numbers with the operators \texttt{max} and \texttt{min}. 
The first loop of the algorithm then computes the maximum seed weight value among the descendants of each node.
Then, the second loop over internal nodes computes the minimum of the maximum weights of its two children, giving the saliency value of the corresponding MST edge.

Similar to the binary case, the levels of the hierarchical watershed associated to $hws$ can be computed by extracting the connected components of the different levels sets of $hws$ as described in the next section.

\begin{algorithm2e}[tb]
\small
\caption{hierarchicalWatershedFromSeeds}\label{alg:hwatersheds_seeds}
\KwData{A BPTAO \texttt{T}}
\KwData{An array \texttt{w\_seeds} of weights, such that for each leaf \texttt{n} of \texttt{T},\texttt{w\_seeds[n] $\neq$ 0} indicates that \texttt{$n$} is a seed and the positive value \texttt{w\_seeds[n]} provides its weight.
}
\KwResult{An array \texttt{hWS} indicating the saliency of MST edges.}
\ForEach{node $n$ of $T$ from leaves (excluded) to root}{
    w\_seeds[n] := max(w\_seeds[n.c1], w\_seeds[n.c2])\;
}
\ForEach{non leaf node $n$ of $T$}{
    hWS[n.mst\_edge] = min(w\_seeds[n.c1], w\_seeds[n.c2])\;
}
\end{algorithm2e}

\section{Extraction of connected components}
\label{sec:extraction_cc}

In this section, we describe how connected components of (hierarchical) watershed segmentations can be labeled once the watershed-cut edges have been extracted from the BPTAO. We can consider three cases:
\begin{enumerate}
    \item {\em Single segmentation from non-weighted seeds:} given the BPTAO and the watershed-cut edges already computed in the previous step (Section \ref{sec:wat_cut_seeds}), we aim to label the connected components of the resulting watershed segmentation. This can be done by simply employing any connected component labeling algorithm on the graph $G'= (V, E')$, where $E'$ is the set of building edges of the BPTAO that are not watershed-cut edges.
    \item {\em Single segmentation from weighted seeds:} similarly, given the BPTAO and the weighted watershed-cut edges computed in the previous step (Section \ref{sec:hierarchical_watershed}), let us say that we aim to extract a single segmentation level from the associated hierarchical watershed. The target segmentation level may be defined by (a) a threshold $\lambda$ on the saliency value of watershed-cut edges; or (b) the number of regions it contains. In the first case, it suffices to discard the watershed-cut edges whose saliency values are lower than $\lambda$ and proceed with a connected component labeling algorithm on the graph $G'= (V, E')$, where $E'$ is the set of building edges of the BPTAO that are not watershed-cut edges or whose saliency value is lower than $\lambda$. In the second case, if a precise number $r$ of regions is required, we can keep only the $r-1$ watershed-cut edges with the highest saliency values, and proceed in the same manner.
    \item {\em Tree representation of a hierarchical watershed:} in order to keep all hierarchical segmentation levels, we may consider a tree representation of the resulting hierarchy. This can be achieved by computing the BPTAO from the MST $G'=(V,E,w')$ of $G$, whose edge set is the set of building edges of the BPTAO and $w'$ assigns the saliency value to the watershed-cut edges and zero to the remaining ones.
\end{enumerate}

\section{Use cases of the watershed-cut framework}
\label{sec:useCases}
In this section, we present the detailed pipeline for some specific use cases of the framework corresponding to different routes in the global algorithmic scheme presented in Figure~\ref{fig:schema_section3}.
The Python scripts for computing those use cases  are presented in a Python \href{https://www.esiee.fr/~perretb/notebooks/Watersheds_and_hierarchical_watersheds.zip}{Notebook} relying on the \href{https://github.com/higra/Higra}{Higra} -- Hierarchical Graph Analysis -- library.

More specifically, the algorithms described previously for computing BPTAO, watershed-cut edges, (weighted) seeds, and connected components are building blocks for several specific use cases, whose aims range from a single unsupervised watershed-cut segmentation to a tree representation of a seeded hierarchical watershed. 
In the following, we present the pipeline for handling the most important cases. 
We consider that the input is an RGB or gray-scale image whose contours are extracted using any given contour detection tool, and whose edge-weighted graph is computed directly from the extracted contours. 
In the demonstration Notebook, contours are extracted using the Structured Edge Detection \cite{dollar2013structured}.

\subsection{Unsupervised watershed-cut}

We start with the completely unsupervised (non-hierarchical) watershed-cut seeded in the set of minima of the input graph, as described in Section \ref{sec:wsc}.  
Given an edge-weighted graph, the method relies on the following  main steps, illustrated in Figure \ref{fig:use_case_unsupervised_watershed_cut}:
\begin{enumerate}
    \item Construction of the BPTAO and associated MST (Section~\ref{sec:bpt}, Algorithm~\ref{alg:bpt});
    \item Identification of the minima seeds  and their weights (Section~\ref{sec:seedminima}, Algorithms~\ref{alg:minima_ann} and~\ref{alg:minima_one_leaf_ann});
    \item Identification of watershed-cut nodes (Section~\ref{sec:wat_cut_seeds}, Algorithm~\ref{alg:wsfromseed});
    \item Connected component labeling of the watershed-cut in the MST (Section~\ref{sec:extraction_cc}).
\end{enumerate}
Figure~\ref{fig:use_case_unsupervised_watershed_cut_demo} demonstrates an application of this pipeline.

\begin{figure*}[htbp]
    \centering
    \includegraphics[width=\textwidth]{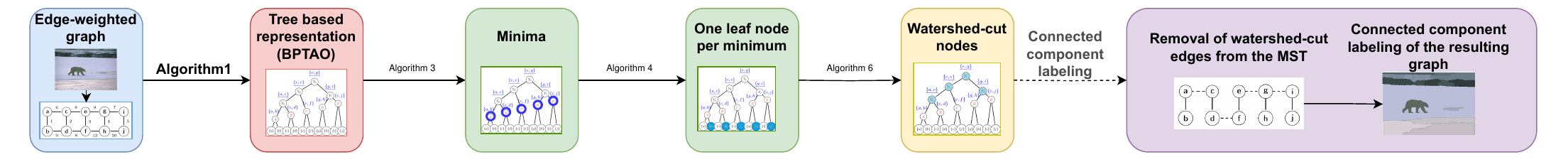}
    \caption{Unsupervised watershed-cut pipeline.}%
    \label{fig:use_case_unsupervised_watershed_cut}
\end{figure*}

\begin{figure*}[ht]
    \centering
\includegraphics[width=0.25\textwidth]{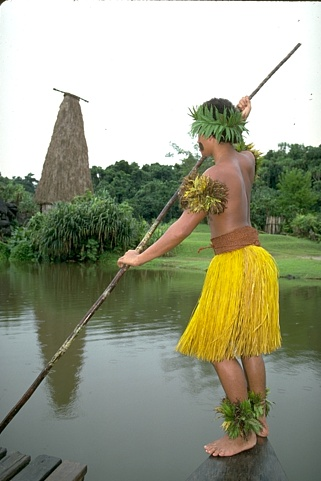}
\includegraphics[width=0.25\textwidth]{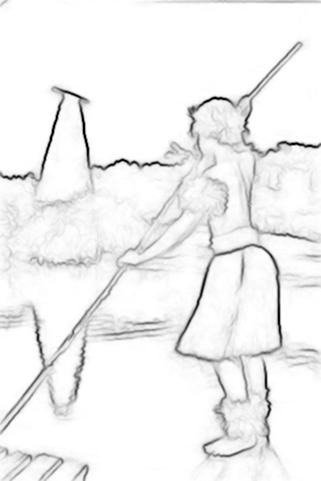}
\includegraphics[width=0.25\textwidth]{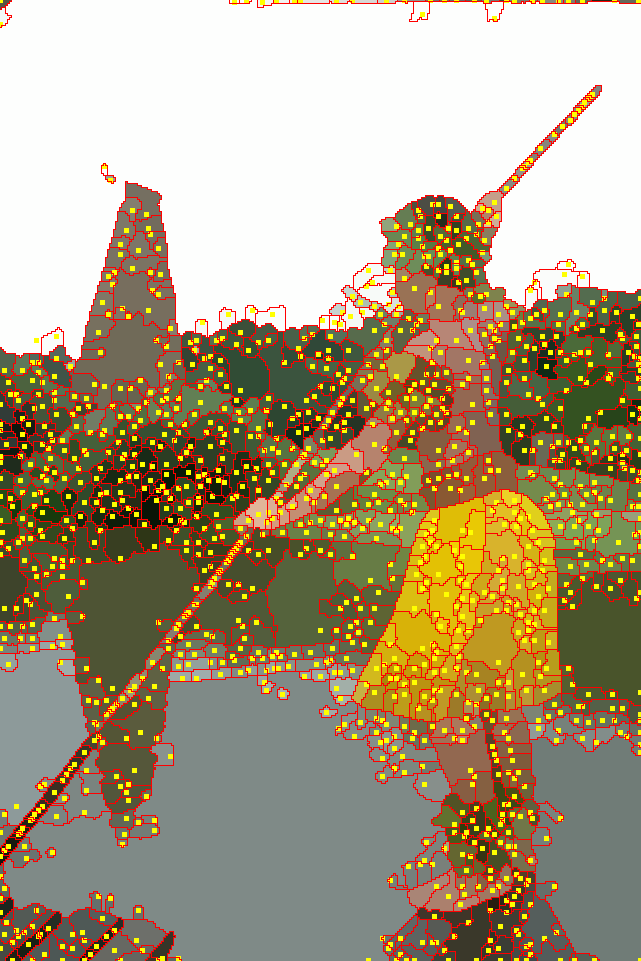}
    \caption{Demonstration of an unsupervised watershed-cut pipeline. From left to right, we have the original image (source BSDS500~\cite{amfm_pami2011}), its gradient (computed using SED~\cite{dollar2013structured}), and its unsupervised watershed cut. In the cut, each region is filled with its average color, and the contours are drawn in red. Each region contains a single yellow point depicting the location of its associated minimum.}
    \label{fig:use_case_unsupervised_watershed_cut_demo}
\end{figure*}

\subsection{Seeded watershed cut}
\label{sec:usecase_swc}

In this scenario, we consider a (non-hierarchical) watershed-cut seeded on a set of seeds which are not necessarily the regional minima of the input graph. 
As discussed in Section~\ref{sec:selection_seeds}, we may consider several approaches for extracting seeds to handle specific applications ({\em e.g.} manual, or deep learning methods). 
Let us say that we are given an edge-weighted graph and a set of seeds represented as a set of vertices. 
We then proceed as follows (see Figure \ref{fig:use_case_seeded_watershed_cut}):

\begin{enumerate}
    \item Construction of the BPTAO and associated MST (Section~\ref{sec:bpt}, Algorithm~\ref{alg:bpt});
    \item Identification of watershed-cut edges (Section~\ref{sec:wat_cut_seeds}, Algorithm~\ref{alg:wsfromseed});
    \item Connected component labeling of the watershed-cut in the MST (Section~\ref{sec:extraction_cc}).
\end{enumerate}

This case is indeed a direct generalization of the previous one, where seeds are considered as an input.
Figure~\ref{fig:use_case_supervised_watershed_cut_demo} demonstrates an application of this pipeline.

\begin{figure*}[htbp]
    \centering
    \includegraphics[width=\textwidth]{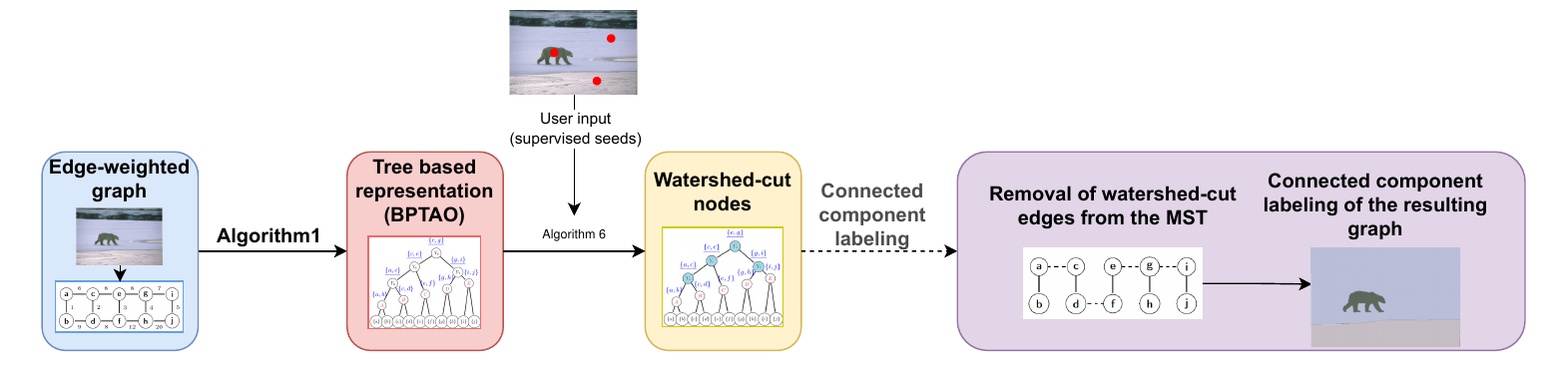}
    \caption{Seeded watershed-cut pipeline.}
    \label{fig:use_case_seeded_watershed_cut}
\end{figure*}

\begin{figure}[ht]
    \centering
\includegraphics[width=0.23\textwidth]{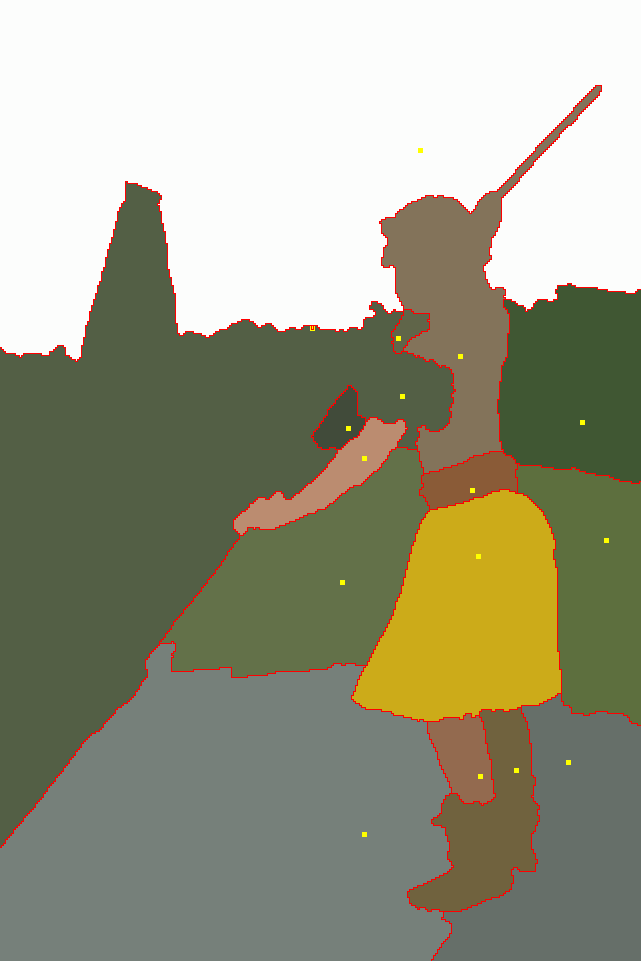}
\includegraphics[width=0.23\textwidth]{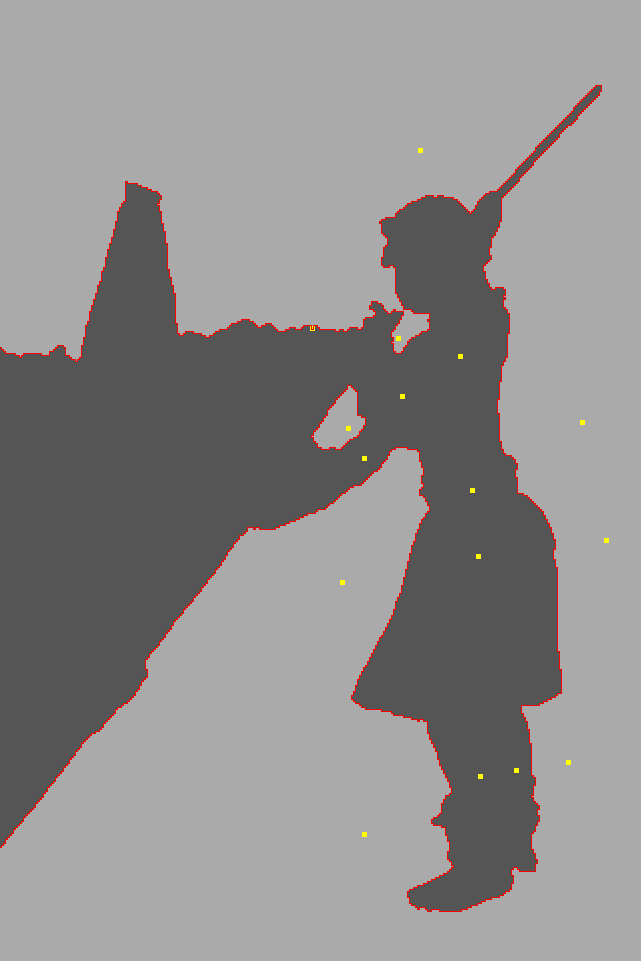}
    \caption{The image and gradient considered are the same as in Figure~\ref{fig:use_case_unsupervised_watershed_cut_demo}. The user provides two sets of pixels as seeds: a set of foreground seeds and a set of background seeds. On the left is the supervised watershed cut. Each region is filled with its average color, and the contours are drawn in red. Each region contains a single yellow point depicting the location of its associated seed. Right: the final segmentation obtained by grouping the regions associated with the foreground and background seeds. It can be seen that the contours are well placed overall with a small number of seeds; however, a gradient leak on the person's shoulder causes a large error.}
    \label{fig:use_case_supervised_watershed_cut_demo}
\end{figure}

\subsection{Tree representation of an area-based hierarchical watershed}

\label{subsection:user_case_tree_area}

Let us now consider a tree representation of a hierarchical watershed by area as our final goal, as illustrated in Figure \ref{fig:use_case_tree_representation_watershed}. 
Similarly to the first scenario, the seeds are the regional minima of the input graph. However, we will now weigh the seeds with area-based extinction values of their associated minima:

\begin{enumerate}
    \item Construction of the BPTAO and associated MST (Section~\ref{sec:bpt}, Algorithm~\ref{alg:bpt});
    \item Computation of area-based extinction values on the nodes of BPTAO (Section \ref{subsection:seeds_hierarchical}, Algorithm \ref{alg:ev});
    \item Identification of the minima seeds  and their weights (Section~\ref{sec:seedminima}, Algorithms~\ref{alg:minima_ann} and~\ref{alg:minima_one_leaf_ann});
    \item Computation of the saliency values of watershed-edges in the MST (Section \ref{sec:hierarchical_watershed}, Algorithm \ref{alg:hwatersheds_seeds});
    \item Construction of the BPTAO of the MST weighted by the saliency values  (Section~\ref{sec:bpt}, Algorithm~\ref{alg:bpt}).
\end{enumerate}

The reader may note that the BPTAO algorithm (Algorithm \ref{alg:bpt}) is employed twice, once on the input  weighted graph $G$, and, in the end, on the reevaluated MST of $G$, which results in the tree representation of the area-based hierarchical watershed of $G$. 

The left subfigure in Figure~\ref{fig:use_case_hierarchical_watershed_demo} demonstrates an application of this pipeline.

\begin{figure*}[htbp]
    \centering
    \includegraphics[width=\textwidth]{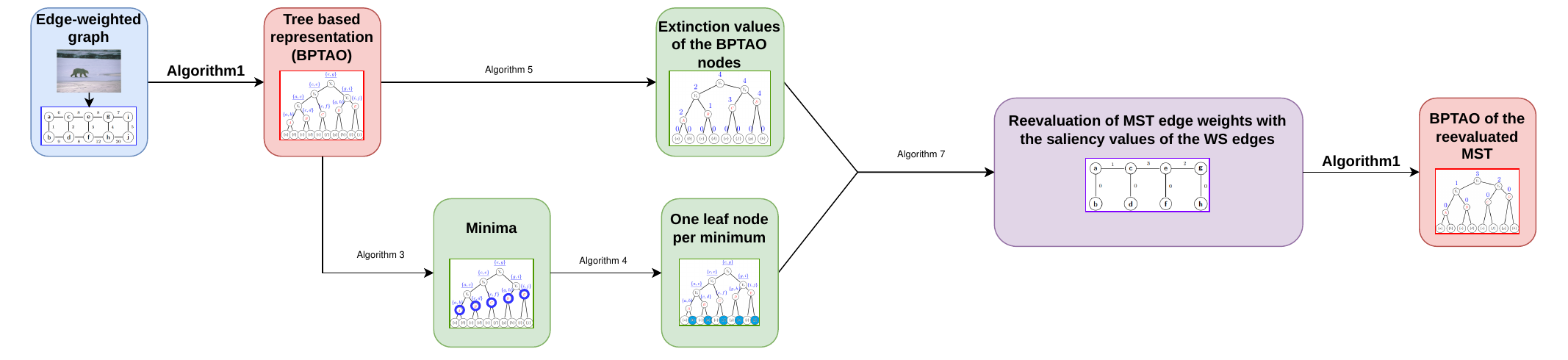}
    \caption{Pipeline for computing a tree representation of a hierarchical watershed.}
    \label{fig:use_case_tree_representation_watershed}
\end{figure*}

\subsection{Watershed cut from a hierarchical watershed}

Once a hierarchical watershed is computed, there are several ways of exploring this representation. 
Given that the input graph has $n$ regional minima, we may want to extract a segmentation level composed of the $k$ most important regions according to the attribute used during the hierarchy construction. 
In this case, we can reuse the first four steps described in the previous section. However, instead of computing the tree representation of the reweighted MST, we apply a threshold to obtain a cut, and then labelise the result.
\begin{itemize}
    \item [5.] Threshold the reevaluated MST weights by keeping only the watershed-cut edges with the $k$ largest weights; and
    \item [6.] Connected component labeling of the watershed cut in the MST (Section~\ref{sec:extraction_cc}).
\end{itemize}
Alternatively, we can give a specific threshold value on the watershed-cut nodes instead of the desired number of regions.

The right subfigure in Figure~\ref{fig:use_case_hierarchical_watershed_demo} demonstrates an application of this pipeline.

\begin{figure}[ht]
    \centering
\includegraphics[width=0.23\textwidth]{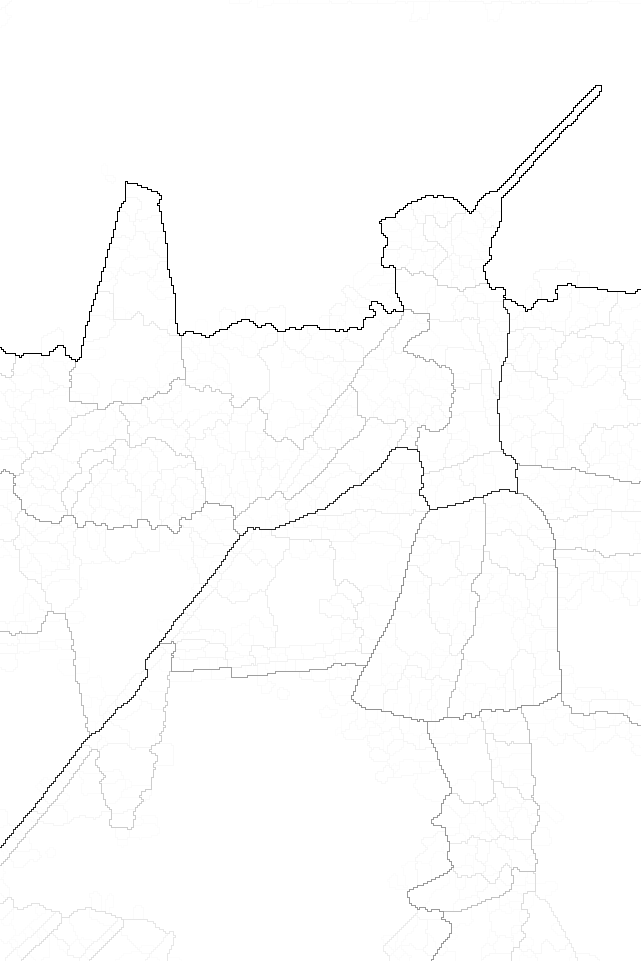}
\includegraphics[width=0.23\textwidth]{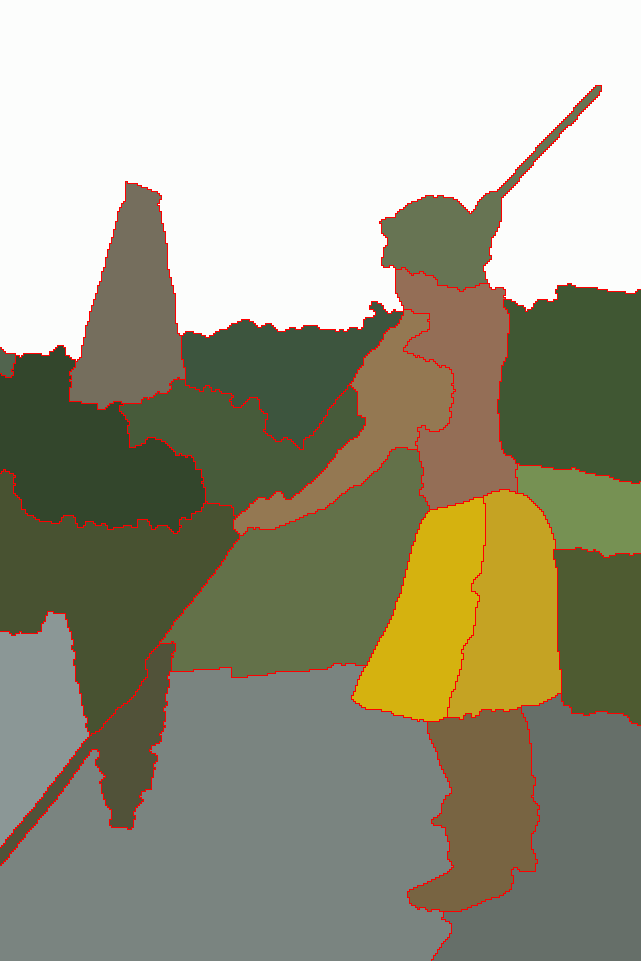}
    \caption{Demonstration of tree-based representation and watershed cut from a hierarchical watershed. The image and gradient considered here are the same as in Figure~\ref{fig:use_case_unsupervised_watershed_cut_demo}. On the left is a saliency map of a hierarchical watershed by area. Each upper threshold of the saliency map represents a segmentation of the image. The right-hand image shows the watershed cut obtained by thresholding the saliency map at a level containing 20 regions.}
    \label{fig:use_case_hierarchical_watershed_demo}
\end{figure}

\subsection{Seeded hierarchical watershed cut}

In the previous use case, we assumed that the desired segmentation can be found by a single threshold on the saliency map. 
As this is not always the case, we can consider a seed based segmentation extraction, in which user input seeds $S$ are used {\em after} a hierarchical watershed has been obtained, as explained in Section \ref{sec:seeded_hierarchical_watershed}. Given the first four steps described in Section \ref{subsection:user_case_tree_area}, we proceed as follows:
\begin{itemize}
    \item [5.] Construction of the BPTAO of the MST weighted by the saliency values   obtained in step~4 of Section \ref{subsection:user_case_tree_area};
    \item [6.] Identification of watershed-cut edges (Section~\ref{sec:wat_cut_seeds}, Algorithm~\ref{alg:wsfromseed});
    \item [7.] Connected component labeling of the watershed-cut in the MST (Section~\ref{sec:extraction_cc}).  
\end{itemize}
Once the hierarchical watershed has been computed, we can see that the method is exactly the same as the seeded watershed method presented in Section~\ref{sec:usecase_swc}, but applied to the BPTAO of the reweighted MST rather than the original graph.

Figure~\ref{fig:seeded_hierarchical_watershed_cut_demo} demonstrates an application of this pipeline.

\begin{figure}[ht]
    \centering
\includegraphics[width=0.23\textwidth]{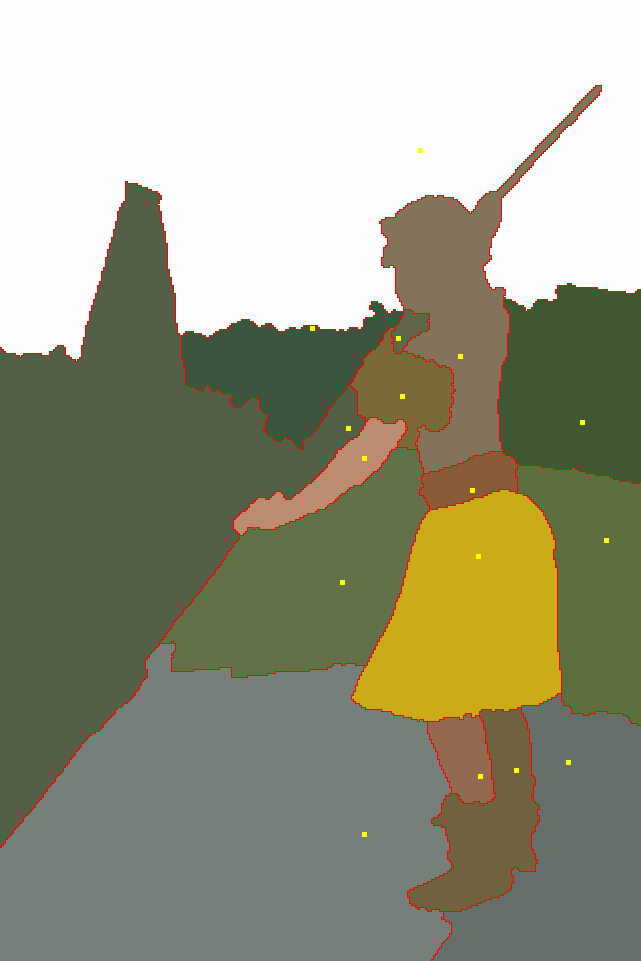}
\includegraphics[width=0.23\textwidth]{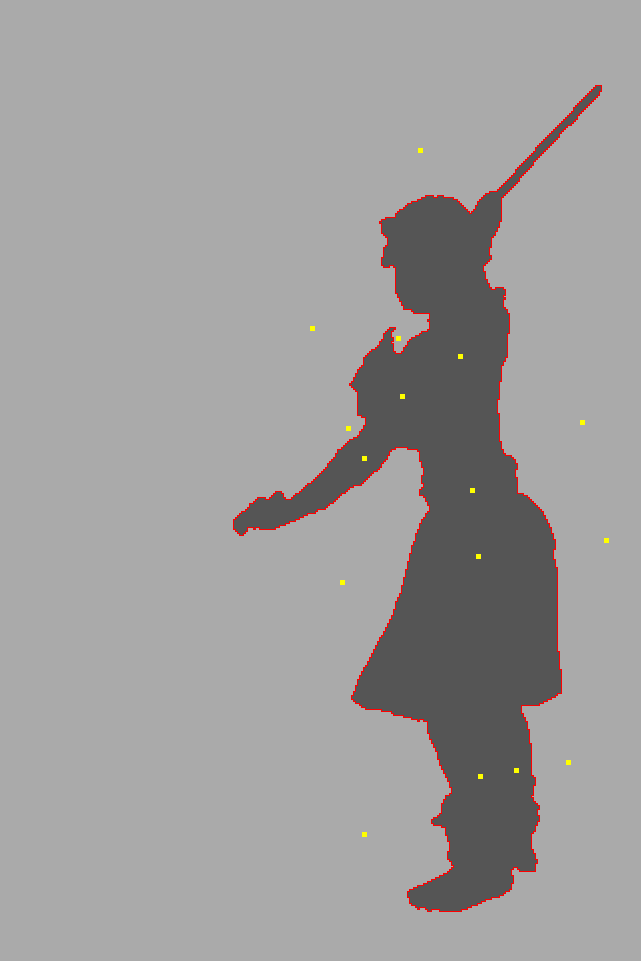}
    \caption{Demonstration of seeded hierarchical watershed-cut pipeline. The image and gradient considered here are the same as in Figure~\ref{fig:use_case_unsupervised_watershed_cut_demo}. The user provides two sets of seeds: a set of foreground seeds and a set of background seeds. On the left is a seeded hierarchical watershed cut based on area. Each region is filled with its average color, and the contours are drawn in red. Each region contains a single yellow point depicting the location of its associated seed. Right: the final segmentation obtained by grouping regions associated with the foreground and background seeds. Compared to Figure~\ref{fig:use_case_supervised_watershed_cut_demo}, the area-based hierarchical watershed helped reduce leakage in the raw gradient, leading to correct boundary placement around the person's shoulder.}
    \label{fig:seeded_hierarchical_watershed_cut_demo}
\end{figure}

\section{Conclusion}

In this article, we have given a comprehensive review of a general framework for computing (hierarchical) watersheds in edge-weighted graphs. This framework revolves around three main ideas: 1) the binary partition tree by altitude ordering encodes the topology of the minima of the graph, 2) a subset of the graph vertices called seeds induces a set of watershed nodes in the tree and 3) those watershed nodes are in bijection with the watershed cut of the graph induced by the seeds. 
Moreover, if the seeds are weighted (for example according to an attribute value), it is then possible to associate a persistence value to the induced watershed nodes which then translates as a saliency map, \emph{i.e.}, a weighted graph-cut or a hierarchy, in the graph space. 
We provide simple and efficient algorithms for each step of this framework, and we demonstrate the modularity of the framework with several use cases. 
All the algorithms and use cases are demonstrated in the Python \href{https://www.esiee.fr/~perretb/notebooks/Watersheds_and_hierarchical_watersheds.zip}{Notebook} relying on the \href{https://github.com/higra/Higra}{Higra} -- Hierarchical Graph Analysis -- library.

The usages and challenges around watersheds and especially watershed hierarchies of course go beyond the use cases presented in the article. We can cite among others: segmentation hierarchy filtering\Cite{soille2008constrained,perret2019removing}, energy based optimization and non horizontal-cuts\Cite{guigues-et-al_ijcv2006,xu2016hierarchical} and combination with machine learning techniques\Cite{chierchia2019ultrametric,Tripletws}.

In future work we plan to adapt and develop new parallel, distributed and massively parallel algorithms (targeted for GPU), building on our existing works on out-of-core (hierarchical) watersheds\Cite{lefevre2025out}.

\section*{Declarations}

\begin{itemize}
\item \textbf{Funding}: The work of L. Najman was partially supported by
  Khalifa University of Science and Technology under Award No. KU-INT-FSU-2026-004-8471000027.
\item \textbf{Conflict of interest}: The authors declare no conflict of interest.
\item \textbf{Data availability}: No datasets were generated or analysed during the current study.
\item \textbf{Author contributions}: All authors contributed equally to the writing and revision of the manuscript.
\end{itemize}

\bibliography{Bibliography}

\appendix
\section{Appendix: Proof of Theorem~\ref{thm:multi-scaleOptimality}}
\label{sec:ProofTheo4}

Let us start this section with two results which can be
straightforwardly derived from classical results on trees and on
minimum spanning trees. To deduce these lemmas, the construction
presented in Sec. III-B of \cite{cousty-et-al_pami2008} linking
minimum spanning trees and forests can be considered.

\begin{lemma}
\label{lem:1}
Let~$\mathcal{M}_1$ and~$\mathcal{M}_2$ be two sets of minima of~$G$
such that~$\mathcal{M}_2 \subseteq \mathcal{M}_1$, then any minimum
spanning forest relative to~$\mathcal{M}_1$ is included in a minimum
spanning forest relative to~$\mathcal{M}_2$.
\end{lemma}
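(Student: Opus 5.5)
We reduce the statement to a classical exchange property of minimum spanning trees, using the standard construction of Sec.~III-B of \cite{cousty-et-al_pami2008}. Given a set~$\mathcal{M}$ of minima, we build an auxiliary graph~$G_{\mathcal{M}}$ by adding a new vertex~$r$ linked to one vertex of each minimum of~$\mathcal{M}$ by an edge of weight strictly smaller than every weight of~$G$ (say $-\infty$, or $\min w - 1$). We also contract, or equivalently give that same minimal weight to, the edges inside each minimum of~$\mathcal{M}$. The key correspondence is the classical one: $F$ is a minimum spanning forest of~$G$ relative to~$\mathcal{M}$ if and only if $F$ together with the added edges (and the internal edges of the minima of~$\mathcal{M}$) is a minimum spanning tree of~$G_{\mathcal{M}}$. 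We take this correspondence from \cite{cousty-et-al_pami2008}.

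With this in hand, we apply it to~$\mathcal{M}_1$ and~$\mathcal{M}_2$. Let $F_1$ be an MSF relative to~$\mathcal{M}_1$, and let $T_1$ be the associated MST of~$G_{\mathcal{M}_1}$. We delete from~$T_1$ the new edges linking~$r$ to the minima of $\mathcal{M}_1\setminus\mathcal{M}_2$. This gives a forest $T_1'$ of~$G_{\mathcal{M}_2}$, in which $r$ is attached only to~$\mathcal{M}_2$. Note that $F_1 \cup \{r\text{-edges of }\mathcal{M}_2\}$ is precisely $T_1'$, which is a minimum spanning forest of the subgraph restricted to the same component structure.

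Next, we extend $T_1'$ to a spanning tree of~$G_{\mathcal{M}_2}$ by running Kruskal's algorithm on~$G_{\mathcal{M}_2}$ starting from~$T_1'$. That is, we browse the remaining edges of~$G$ by increasing weight and add each one that joins two distinct components. This yields a spanning tree $T_2 \supseteq T_1'$. We claim $T_2$ is an MST of~$G_{\mathcal{M}_2}$, and we prove it by the cut property. Every edge of~$T_1'$ is an MST edge of~$G_{\mathcal{M}_1}$, so it is a lightest edge across some cut of~$G_{\mathcal{M}_1}$. We must show it remains a lightest edge across a suitable cut in~$G_{\mathcal{M}_2}$.

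This transfer of the cut property is the main obstacle. The cleanest route is to view $G_{\mathcal{M}_2}$ as~$G_{\mathcal{M}_1}$ with the $r$-edges to $\mathcal{M}_1\setminus\mathcal{M}_2$ raised from the minimal value to the original plateau weight of the corresponding minimum. Raising the weight of edges leaves every MST edge of weight not affected optimal, provided the raised edges are removed first. Concretely, we argue via the cycle property. Take any non-tree edge~$e$ of~$T_2$ in~$G_{\mathcal{M}_2}$ and the cycle it closes in~$T_2$. Edges on that cycle added by Kruskal are no heavier than~$e$ by construction. Edges from $F_1$ on that cycle are no heavier than~$e$, because otherwise an exchange would strictly improve~$F_1$ in~$G_{\mathcal{M}_1}$. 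The exchange would stay valid there since the path avoids~$r$ or uses only $r$-edges common to both graphs. Here we use that minima of $\mathcal{M}_1\setminus\mathcal{M}_2$ are regional minima, so their boundary edges are strictly heavier, and this keeps the exchange admissible.

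Hence $T_2$ satisfies the cycle property and is an MST. Removing $r$ and its edges from~$T_2$ gives an MSF relative to~$\mathcal{M}_2$, and this forest contains~$F_1$.
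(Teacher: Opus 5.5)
Your reduction to minimum spanning trees via the construction of Sec.~III-B of \cite{cousty-et-al_pami2008} is the route the paper points to; the paper gives no further detail. However, the step you yourself call the main obstacle, that the Kruskal completion $T_2$ of $T_1'$ is an MST of $G_{\mathcal{M}_2}$, is not justified by what you write.

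First, your reformulation of $G_{\mathcal{M}_2}$ is wrong. If the $r$-edge to a minimum $M\in\mathcal{M}_1\setminus\mathcal{M}_2$ is raised to the plateau weight $k_M$, it is still lighter than every edge leaving $M$. Every MST of that graph then keeps it, and $M$ stays a root. In $G_{\mathcal{M}_2}$ that edge must be deleted, and it is the internal edges of $M$ that return from the minimal value to $k_M$.

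More seriously, the exchange in your cycle-property argument is invalid. Take $f\in F_1$ on the $T_2$-cycle of a non-tree edge $e$. Then $T_1-f+e$ is a spanning tree only if $f$ lies on the $T_1$-path between the ends of $e$. The $T_2$-path, however, generally runs through Kruskal-added edges that are not in $T_1$. Concretely, let $k$ be the Kruskal edge attaching the tree of $M$ to the rest, let $f$ be an edge of that tree separating the end of $k$ from $M$, and let $e$ leave the tree on $M$'s side of $f$. Then $f$ is on the $T_2$-cycle of $e$, yet the ends of $e$ stay connected in $T_1-f$ through the $r$-edge of $M$, so no exchange of $f$ with $e$ exists. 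Your remark about $r$-edges does not address this.

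Finally, if $f$ is an internal edge of a minimum of $\mathcal{M}_1\setminus\mathcal{M}_2$, its weight in $G_{\mathcal{M}_1}$ is the minimal value. No exchange in $G_{\mathcal{M}_1}$ can then give a contradiction, and the sentence invoking regional minima does not supply one.

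The repair is to exchange $f$ with the right edge. Suppose $f$ is not internal to a minimum of $\mathcal{M}_1\setminus\mathcal{M}_2$ and $w(f)>w(e)$ in $G_{\mathcal{M}_2}$. The cycle formed by $e$ and its $T_2$-path crosses the fundamental cut of $f$ in $T_1$ at some edge $g\ne f$. Since $g\notin T_1$, $g$ is $e$ or a Kruskal edge, so $w(g)\le w(e)<w(f)$. Weights in $G_{\mathcal{M}_1}$ are never larger than in $G_{\mathcal{M}_2}$ and agree on $f$, so $T_1-f+g$ is strictly lighter than $T_1$, a contradiction.

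Now suppose $f$ is internal to such a minimum $M$ and $w(e)<k_M$. The cycle cannot stay inside $M$, so it leaves $M$ through an edge of weight $>k_M>w(e)$. That edge is neither $e$ nor a Kruskal edge, so it is an $F_1$-edge of the first kind heavier than $e$ (distinct minima are vertex-disjoint), which the first case excludes.

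A cleaner variant avoids the internal-edge case altogether. Do not lower the weights inside the minima. Since every edge leaving a minimum is heavier than its plateau, any MST of $G_{\mathcal{M}}$ already contains a spanning tree of each minimum of $\mathcal{M}$. Then $G_{\mathcal{M}_2}$ is exactly $G_{\mathcal{M}_1}$ minus the set $S$ of $r$-edges to $\mathcal{M}_1\setminus\mathcal{M}_2$. You can now use the classical fact that $T\setminus S$ lies in some MST of $H\setminus S$ whenever $T$ is an MST of $H$. To prove it, take an MST $T'$ of $H\setminus S$ maximizing $|T'\cap T|$, and swap any missing $e\in T\setminus S$ with an edge of its $T'$-cycle that crosses the fundamental cut of $e$ in $T$. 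This yields $T_2\supseteq T_1\setminus S$ directly.
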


Let~$X$ be a subgraph of~$(V,E)$. We denote by~$V(X)$ (resp.~$E(X)$) the set of vertices (resp. edges) of~$X$. Let~$u \in E(X)$.We write~$X \setminus u$ for the graph $(V(X), E(X) \setminus \{u\})$. 
If~$v=\{x,y\} \in E \setminus E(X)$, we write~$X \cup v$ for the graph~$(V(X) \cup \{x,y\}, E(X) \cup \{v\})$.

\begin{lemma}[lemma 17 in \cite{cousty-et-al_pami2008}]
\label{lem:2}
Let~$\mathcal{M}$ be a marker set and let~$Y$ be a spanning forest rooted in~$\mathcal{M}$. If, for any~$u \in E(Y)$ and any~$v \in E$ such that $(Y
\setminus u) \cup v$ is a spanning forest rooted in~$\mathcal{M}$, we
have $w(u) \leq w(v)$, then~$Y$ is an MSF rooted
in~$\mathcal{M}$.
\end{lemma}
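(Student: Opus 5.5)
The plan is to reduce the statement to the classical fact that a \emph{locally optimal} spanning tree, one that cannot be improved by a single edge exchange, is a minimum spanning tree. I would use the construction of Sec.~III-B of \cite{cousty-et-al_pami2008} mentioned above: from $G$ and $\mathcal{M}$, build a graph $G_\mathcal{M}$ by contracting the vertex set of each marker $M \in \mathcal{M}$ into a single vertex $r_M$. Then add a new vertex $r$ joined to every $r_M$ by an edge whose weight is smaller than every weight of $G$. Edges of $G$ inside a marker disappear. Every other edge keeps its weight, with its end-vertices replaced by their images.

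The first step is to check the correspondence. Let $X$ be a spanning forest rooted in $\mathcal{M}$, restricted to its edges that are not inside a marker. Add to it all the edges $\{r,r_M\}$. This gives a spanning tree of $G_\mathcal{M}$, because each component of $X$ contains exactly one marker and is acyclic. Conversely, every spanning tree of $G_\mathcal{M}$ containing all edges $\{r,r_M\}$ comes from such a forest. The map preserves the weight up to an additive constant, since the marker edges and the $\{r,r_M\}$ edges are common to all the objects involved. Exchanges also correspond. Take $u$ an edge of $Y$ not in a marker and $v \in E$. Then $(Y\setminus u)\cup v$ is a spanning forest rooted in $\mathcal{M}$ exactly when $T_Y - u + v$ is a spanning tree of $G_\mathcal{M}$, where $T_Y$ is the image of $Y$. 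Hence the hypothesis becomes: for every $u\in T_Y$ not incident to $r$ and every edge $v$ with $T_Y-u+v$ a spanning tree, $w(u)\le w(v)$. Exchanges involving the edges $\{r,r_M\}$ never help a competitor, since those edges are the lightest and lie in every MST of $G_\mathcal{M}$.

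The second step is the classical exchange argument. Among all MSTs of $G_\mathcal{M}$, choose $T^*$ sharing the maximum number of edges with $T_Y$. If $T^*=T_Y$ we are done. Otherwise pick $u \in T_Y\setminus T^*$; it is not an $\{r,r_M\}$ edge. Removing $u$ from $T_Y$ splits the vertices into two sides. The unique cycle of $T^*+u$ crosses this cut at $u$ and at some other edge $v\in T^*\setminus T_Y$. Then $T_Y-u+v$ is a spanning tree, so $w(u)\le w(v)$ by the hypothesis. Also $T^*-v+u$ is a spanning tree of weight at most $w(T^*)$, hence an MST, and it shares one more edge with $T_Y$. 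This contradicts the choice of $T^*$. Therefore $T_Y$ is an MST, and by the correspondence $Y$ is an MSF rooted in $\mathcal{M}$.

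I expect the main obstacle to be the bookkeeping of the first step, not the exchange argument itself. One must check that the definition of ``rooted in $\mathcal{M}$'' forces the marker edges into the forest, or else handle them as a fixed constant. One must also check that contraction creates no spurious exchanges. Parallel edges created by the contraction need no special care, since the argument works for multigraphs. Edges of $G$ with both ends in distinct markers become edges between different $r_M$'s and are treated like any other edge.
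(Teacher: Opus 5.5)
Your proposal is correct and follows essentially the route the paper indicates: the paper gives no detailed argument, only saying the lemma follows from classical MST results via the construction of Sec.~III-B of \cite{cousty-et-al_pami2008}, and you carry this out in full by contracting the markers, adding a light root vertex, and running the standard maximal-overlap exchange argument. One small wording fix: in the correspondence step, what you need is that $X$ has no cycle outside the markers (it is acyclic after contraction), not that its components are acyclic, since a marker such as a plateau minimum may itself contain cycles.
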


Let us state a first result on hierarchical watersheds obtained from one-by-one filtrations. This result is a consequence of the fact that the attribute used to select the connected components which are kept in the resulting map is increasing and that the filtration series is one-by-one. This result is the key ingredient in the subsequent proof of Theorem~\ref{thm:multi-scaleOptimality}

Let~$\Phi = (\Phi_0(G)=G, \ldots, \Phi_N(G))$ be a filtration series of~$G$. We say that~$\Phi$ is {\em one-by-one} if, for any~$\lambda \in \{1, \ldots, N\}$,  we have~$n_\lambda = n_{\lambda-1}-1$, where~$n_\lambda$ and $n_{\lambda-1}$ are the numbers of minima of~$\Phi_\lambda(G)$ and~$\Phi_{\lambda-1}(G)$, respectively.

\begin{lemma}
  \label{lem:3} Let~$\Phi = (\Phi_0(G)=G, \ldots, \Phi_N(G))$ be a one-by-one filtration series of~$G$. Let~$\mathcal{H} = (C_0, \ldots, C_N)$ be a hierarchical watershed with respect to~$\Phi$. Let~$i \in \{0, \ldots, N\}$.
  Then, for any edge~$u$ in the cut set induced by~$C_i$, for any~$j \in \{0, \ldots, i\}$, we have~$\Phi_i(G)(u) = \Phi_j(G)(u)$.
\end{lemma}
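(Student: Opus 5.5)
The plan is to argue by induction on $j$ going downward from $i$, or equivalently to show that for every $\lambda \in \{1,\ldots,i\}$ we have $\Phi_\lambda(G)(u) = \Phi_{\lambda-1}(G)(u)$ for every edge $u$ of the cut set of $C_i$. Two facts are used throughout. First, the hierarchy principle implies that the cut set of $C_i$ is included in the cut set of $C_\lambda$ for every $\lambda \le i$, since $C_\lambda$ refines $C_i$. So it suffices to prove, for each $\lambda \ge 1$, the following: every edge $u$ of the cut set of $C_\lambda$ satisfies $\Phi_\lambda(G)(u) = \Phi_{\lambda-1}(G)(u)$. Chaining these equalities for $\lambda = i, i-1, \ldots, 1$ then gives the statement. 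Second, the filtrations are extensive connected operators, hence $\Phi_{\lambda-1}(G) \le \Phi_\lambda(G)$ pointwise, and $\Phi_\lambda$ can be viewed as a connected closing of $\Phi_{\lambda-1}$. I will use this in the form: $\Phi_\lambda$ modifies $\Phi_{\lambda-1}$ only on the lower components that are removed, and a modified edge is raised to the level at which its component merges into a kept component.

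Fix $\lambda \ge 1$. Since $\Phi$ is one-by-one, exactly one minimum $M$ of $\Phi_{\lambda-1}(G)$ disappears. By the connected-operator description, the edges whose weight changes are exactly those in the region $R$ filled in when $M$ is raised. $R$ is a connected lower component of $\Phi_{\lambda-1}(G)$ containing $M$, and it contains no other minimum; otherwise the increasing attribute would force a second minimum to vanish too, or $R$ would contain a kept minimum and not be removed. All edges of $R$ are raised to the level $k^\ast$ of the lowest pass linking $R$ to another component. Take $u$ in the cut set of $C_\lambda$ and suppose, for contradiction, that $\Phi_\lambda(G)(u) > \Phi_{\lambda-1}(G)(u)$. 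Then $u$ lies in $R$, so both its extremities lie in the closure of $R$. After filling, this region becomes a plateau at level $k^\ast$ that is not a minimum. Water dropped from $u$ descends in $\Phi_\lambda(G)$ only through the unique lowest exit(s) of the plateau, into a single catchment basin. This should contradict the drop of water principle, which requires descending paths from the two extremities of $u$ toward two distinct minima of $\Phi_\lambda(G)$ without crossing the cut set.

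The main obstacle is making the last step rigorous. Several passes at level $k^\ast$ may exist, and descending paths along a plateau can leave by different exits toward distinct minima, so the contradiction is not immediate. My first attempt will be to use that the filled component at the moment of filling merges at level $k^\ast$ with exactly one component containing a surviving minimum. This again uses one-by-one plus increasingness: if two such components met $R$ at level $k^\ast$, the number of minima would drop by more than one, or $R$ would be reached only through a plateau shared with another component. Every descending path from $u$ that stays at or below $\Phi_\lambda(G)(u)=k^\ast$ and avoids the cut set would then stay in $R$ union that single component. So both extremities would reach the same minimum, contradicting that $u$ separates two regions each containing a distinct minimum. If that fails, I will fall back on Theorem~\ref{thm:WCOptimality} applied to $\Phi_\lambda(G)$, combined with Lemma~\ref{lem:2}. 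The aim is to show that an edge of the raised plateau can be exchanged against a pass edge of no greater weight, so that it can be excluded from the cut set of some MSF-cut; making this compatible with the given cut $C_\lambda$ is the delicate part.
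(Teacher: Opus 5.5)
Your reduction to consecutive levels is sound and matches the paper's induction: the cut set of $C_i$ is contained in that of $C_\lambda$ for every $\lambda\le i$, so it suffices to show $\Phi_\lambda(G)(u)=\Phi_{\lambda-1}(G)(u)$ for $u$ in the cut set of $C_\lambda$, and then chain. The gap is in the key step. You try to exclude a raised edge using only that $C_\lambda$ is a watershed cut of $\Phi_\lambda(G)$, and this cannot succeed, because a raised edge can be a legitimate watershed-cut edge of $\Phi_\lambda(G)$.

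Here is a counterexample. Let $K_1$ and $K_2$ be plateaus of weight $0$ with at least four vertices each. Join them by the path $(k_1,a,b,c,k_2)$, with $k_1\in K_1$ and $k_2\in K_2$, whose edges have weights $5,2,1,5$. Let $\Phi_1$ be the area closing removing the components with fewer than four vertices. The minima pass from $K_1$, $\{b,c\}$, $K_2$ to $K_1$, $K_2$, so the series is one-by-one, and $\{a,b\}$, $\{b,c\}$ are raised to $5$. Yet the partition $\{K_1\cup\{a\},\, \{b,c\}\cup K_2\}$ is a watershed cut of $\Phi_1(G)$ with cut set $\{\{a,b\}\}$. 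From $a$ one descends through $\{a,k_1\}$ (weight $5\le 5$) into $K_1$, and from $b$ through $\{b,c\},\{c,k_2\}$ (weights $5,5$) into $K_2$.

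So there is no contradiction to be found with the drop of water principle at level $\lambda$. In particular, your auxiliary claim that the filled region merges at level $k^\ast$ with only one component containing a surviving minimum is false here: $\{a,b,c\}$ meets both $K_1$ and $K_2$ at level $5$, while only one minimum vanishes. Your fallback through Theorem~\ref{thm:WCOptimality} and Lemma~\ref{lem:2} fails for the same reason: producing \emph{some} watershed cut of $\Phi_\lambda(G)$ that avoids $u$ says nothing about the given $C_\lambda$.

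What actually excludes $\{a,b\}$ in this example is the level below. In $G$, the only edge of weight at most $2$ at $a$ is $\{a,b\}$ itself, and $a$ lies in no minimum. So $\{a,b\}$ violates the drop of water principle for $\Phi_0$, is not in the cut set of $C_0$, and hence, by nesting, is not in that of $C_1$. This is the missing idea, and it is the paper's argument:
\begin{itemize}
\item Since $u=\{x,y\}$ also lies in the cut set of $C_{\lambda-1}$, the drop-of-water paths for $\Phi_{\lambda-1}$ from $x$ and $y$ use only weights at most $\Phi_{\lambda-1}(G)(u)$.
\item They therefore stay in the connected component $C'$ of the lower threshold of $\Phi_{\lambda-1}$ at level $\Phi_{\lambda-1}(G)(u)$ that contains $u$, so $C'$ contains two distinct minima of $\Phi_{\lambda-1}(G)$.
\item As the series is one-by-one, at least one of them is still included in a minimum of $\Phi_\lambda(G)$.
\item Threshold components are nested or disjoint and $\mu$ is increasing, so $\mu(C')$ reaches the scale parameter. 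Hence $C'$ is not removed, and $\Phi_\lambda(G)(u)=\Phi_{\lambda-1}(G)(u)$.
\end{itemize}

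Separately, your description of the raised edges as a single filled region containing exactly one minimum is not right in general. Two sibling minima may both be filled under a kept parent that becomes a new plateau minimum, with the count dropping by only one. The argument above never needs to describe the raised set at all.
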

\begin{proof} In this proof, to simplify the notations, we write~$\Phi_i$ instead of~$\Phi_i(G)$ when no confusion may occur.  By abuse of notation, we also write~$\Phi_i$ to denote the weight map of the edge-weighted graph~$\Phi_i(G)$. The proof is done by induction.

(1)  {\bf Induction base}. If~$i = 0$, the property is trivial.

(2) {\bf Induction hypothesis}. Let~$i \geq 1$. Assume that for any~$\lambda \in \{0, \ldots, i-1\}$,  for any edge~$u$ in the cut set induced by~$C_\lambda$, and for any~$j \in \{0, \ldots, \lambda\}$, we have~$\Phi_\lambda(G)(u) = \Phi_j(G)(u)$.

(3)  {\bf Induction}. In order to complete the proof of
    Lemma~\ref{lem:3} by induction, we
    are going to show that if the induction hypothesis holds true, then,  for any edge~$u = \{x,y\}$ in the cut set induced by~$C_i$, and for any~$j \in \{0, \ldots, i\}$, we have~$\Phi_i(u) = \Phi_j(u)$. Let~$u = \{x,y\}$ be any edge in the cut set of~$C_i$. 
    Let~$R_x$ (resp.~$R_y$) be the region of~$C_i$ that contains~$x$ (resp.~$y$). As~$C_i$ is a watershed cut of~$\Phi_i$, there exists a unique minimum~$M_x$ (resp.~$M_y$) of~$\Phi_i$ that is included in~$R_x$ (resp.~$R_y$). 
    We denote by~$C'$ the connected component in~$C(G,\Phi_{i-1}, \Phi_{i-1}(u))$ that contains~$u$. Let~$R'_x$ (resp.~$R'_y$) be the region of~$C_{i-1}$ that contains~$x$ (resp.~$y$). 
    As~$C_{i-1}$ is a watershed cut of~$\Phi_{i-1}$, (i) there exists a unique minimum~$M'_x$ (resp.~$M'_y$) of~$\Phi_{i-1}$ that is included in~$R'_x$ (resp.~$R'_y$), and (ii) the minima $M'_x$ and~$M'_y$ are distinct. 
    Furthermore, by the drop of water principle, there exists a descending path from~$x$ (resp.~$y$) to~$M'_x$ (resp. $M'_y$). Thus,~$M'_x$ (resp.~$M'_y$) is included in~$C'$. 
    As~$\mathcal{H}$ is a hierarchy, we have $R'_x \subseteq R_x$ (resp. $R'_y \subseteq R_y$). Hence, we deduce that~$M'_x \subseteq R_x$ (resp.~$M'_y \subseteq R_y$). 
   
   As~$\Phi$ is one-by-one, at least one of the following statements holds true: (i) $M'_x \subseteq M_x$ or~(ii) $M'_y \subseteq M_y$, otherwise the number of minima of~$\Phi_i$ would decrease by at least two with respect to the one of~$\Phi_{i-1}$. This implies that we either have~$M_x \cap C' \neq \emptyset$ or~$M_y \cap C' \neq \emptyset$. The graphs~$M_x, M_y,$ and $C'$ are connected components of~$\Phi_{i-1}$ (more precisely for each of them, there exists~$k$ such that it belongs to~$C(G, \Phi_{i-1},k)$). It is well known that any two connected components of a weighted graph are either nested or disjoint. Thus, we have~$M_x \subseteq C'$ or~$M_y \subseteq C'$. The minima~$M_x$ and~$M_y$ are two connected components of~$\Phi_i$. Thus,~the measure~$\mu$ of~$M_x$ and~$M_y$ involved in the filtration is greater than the scale parameter~$\lambda_i$ associated to~$\Phi_i$. As~$\mu$ is increasing, we deduce that the measure~$\mu$ of the region~$C'$ is also  greater than the scale parameter~$\lambda_i$. 
    Thus,~$C'$ is preserved as a connected component of~$\Phi_i$ at the same level as in $\Phi_{i-1}$ (\ie~$C' \in C(G, \Phi_i, \Phi_{i-1}(u))$).
As~$u$ belongs to~$C'$, we deduce that~$\Phi_{i}(u) = \Phi_{i-1}(u)$. As~$\mathcal{H}$ is a hierarchy, since~$u$ belongs to the cut set of~$C_i$, it also belongs to the cut set of~$C_{i-1}$. Thus, from the induction hypothesis, we deduce that $\Phi_{i}(u) = \Phi_{j}(u)$, for any~$j \in \{0, \ldots, i\}$, which completes the proof.
    \end{proof}

We first prove the forward implication of Theorem~\ref{thm:multi-scaleOptimality}. It is a straightforward consequence of Theorem~\ref{thm:seededWC} or, more precisely, of the following theorem established by Allène {\em et al}.

\begin{thm}(from Theorem 5.2 in \cite{allene-et-al_ivc2010})
Let~$\mathcal{M}$ be a marker set and let~$(\Phi_0, \Phi_1)$ be a  filtration series. Any minimum spanning forest rooted in~$\mathcal{M}$ for~$\Phi_0$ is a minimum spanning forest rooted in~$\mathcal{M}$ for~$\Phi_1$.
\end{thm}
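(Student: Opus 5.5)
The plan is to verify the exchange criterion of Lemma~\ref{lem:2} for the weight map $\Phi_1$. Let $Y$ be a minimum spanning forest rooted in~$\mathcal{M}$ for~$\Phi_0$. Take any $u \in E(Y)$ and any $v \in E$ such that $(Y \setminus u) \cup v$ is a spanning forest rooted in~$\mathcal{M}$. By Lemma~\ref{lem:2} it is enough to show $\Phi_1(u) \leq \Phi_1(v)$. The difficulty is that $\Phi_0(u) \leq \Phi_0(v)$ alone does not give this, since a connected filtering is not a pointwise increasing function of the weight. We therefore need a structural link between $u$ and~$v$ in the lower thresholds of~$\Phi_0$.

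\textbf{Step 1: a low path joining $u$ and $v$.} Removing $u$ from $Y$ splits the component of $Y$ containing $u$ into two trees. One of them, call it $B$, contains no marker. Since $(Y\setminus u)\cup v$ is rooted in~$\mathcal{M}$, the edge $v$ must have exactly one endpoint $b$ in $B$ and reattach $B$ to a marker-rooted tree. Let $a$ be the endpoint of $u$ in $B$. There is a path $\pi$ in $Y$ from $b$ to $a$ that stays inside $B$. Now use the construction of Sec.~III-B of \cite{cousty-et-al_pami2008}: contract all markers into a single vertex, so that $Y$ becomes a minimum spanning tree of the contracted graph for $\Phi_0$. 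In that tree, the fundamental cycle of $v$ contains $\pi$ followed by $u$. By the cycle property of minimum spanning trees, every edge of $\pi\cdot u$ has $\Phi_0$-weight at most $\Phi_0(v)$. The edges of $\pi$ lie in $B$, which contains no marker, so none of them is lost in the contraction and the property applies to all of them.

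\textbf{Step 2: transfer to $\Phi_1$.} By definition of the extensive connected operator, $\Phi_1(v)=k$ is the smallest level $k$ such that $v$ is an edge of a kept component $K\in C'(G,\Phi_0,k)$. Then $k\geq \Phi_0(v)$. The subgraph formed by $v$, $\pi$ and $u$ is connected, and all its edges have $\Phi_0$-weight at most $\Phi_0(v)\leq k$. Hence it lies in a single connected component of the lower threshold at level $k$, and that component is exactly $K$, because $K$ contains $v$. So $u$ is an edge of $K \in C'(G,\Phi_0,k)$, which gives $\Phi_1(u)\leq k=\Phi_1(v)$. Lemma~\ref{lem:2} then concludes that $Y$ is a minimum spanning forest rooted in $\mathcal{M}$ for $\Phi_1$. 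Note that $Y$ is already a spanning forest rooted in $\mathcal{M}$, since this property does not depend on the weights.

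\textbf{Main obstacle.} The delicate point is Step~1, namely justifying that the whole $B$-side path from $b$ to $u$ is bounded by $\Phi_0(v)$. This requires handling the contraction of markers correctly. I would first check that the fundamental-cycle argument in the contracted graph never routes through marker edges on the $B$ side; this holds because $B$ contains no marker. Step~2 should then be routine.
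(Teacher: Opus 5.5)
The paper does not prove this statement. It quotes it from Theorem~5.2 of All\`ene et al.\ \cite{allene-et-al_ivc2010} and uses it only in the forward implication of Theorem~\ref{thm:multi-scaleOptimality}. Your proposal therefore supplies an argument the paper omits, and it is correct.

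\begin{itemize}
\item The reduction to Lemma~\ref{lem:2} is legitimate, because being a spanning forest rooted in $\mathcal{M}$ does not depend on the weights.
\item In Step~1 you correctly observe that $v$ has exactly one endpoint in the marker-free piece $B$, and that $u$ is the only edge of $Y$ leaving $B$. Hence $\pi\cdot u$ lies on the fundamental cycle of $v$.
\item Step~2 uses exactly the right property of the filter: $\Phi_1(v)$ is realized by a kept threshold component $K$ of $\Phi_0$ containing $v$, and $K$ must then also contain $u$.
\end{itemize}
Two trivial cases are worth stating explicitly. If $v=u$ there is nothing to prove. If no component containing $v$ is ever kept (for instance when the measure of the whole graph is below the scale parameter), then $\Phi_1(v)=+\infty$ and again there is nothing to prove.

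You can also drop the detour through the contracted graph in Step~1, the point you flagged as delicate, and argue by direct exchange.
\begin{itemize}
\item Take any edge $e$ of $\pi\cdot u$. Removing $e$ from $Y$ detaches a marker-free subtree that contains $b$ and is included in $B$.
\item The other endpoint $c$ of $v$ lies outside $B$, in a component of $Y\setminus e$ that contains a marker.
\item Hence $(Y\setminus e)\cup v$ is again a spanning forest rooted in $\mathcal{M}$, and the minimality of $Y$ for $\Phi_0$ gives $\Phi_0(e)\leq\Phi_0(v)$ directly.
\end{itemize}
Compared with citing the result, your route makes the appendix self-contained: it needs only Lemma~\ref{lem:2} and the stacking formula that defines the filter. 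It also makes clear which hypotheses are actually used. The increasingness of $\mu$ plays no role here; it is needed elsewhere, e.g.\ in Lemma~\ref{lem:3}. The argument works for any map obtained by stacking a selection of threshold components of $\Phi_0$. The citation, in turn, keeps the appendix short and inherits the generality of the original reference.
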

{
\begin{proof}[Proof of the forward implication of Theorem~\ref{thm:multi-scaleOptimality}]
Let~$\lambda \in \{0,\ldots,N\}$, and let us denote by~$\mathcal{M}_\lambda$ the set which contains every minimum~$M$ of~$G$ whose extinction value (for the filtration series~$\Phi$) is greater than or equal to~$\lambda$ (\ie{} $\epsilon(M) \geq \lambda$). Assume that condition~(i) of Theorem~\ref{thm:multi-scaleOptimality} holds true. Then~$F_\lambda$ is a minimum spanning forest of~$\Phi_0(G)=G$ rooted in~$\mathcal{M}_\lambda$. By applying the previous theorem along the filtration from~$\Phi_0(G)$ to~$\Phi_\lambda(G)$, we obtain that~$F_\lambda$ is also a minimum spanning forest of~$\Phi_\lambda(G)$ rooted in~$\mathcal{M}_\lambda$. Since~$\mathcal{M}_\lambda$ is the set of regional minima of~$\Phi_\lambda(G)$, Theorem~\ref{thm:WCOptimality} implies that~$C_\lambda$ is a watershed cut of~$\Phi_\lambda(G)$. This holds for every~$\lambda$, and~$\mathcal{H}$ satisfies the hierarchy principle by assumption. Therefore~$\mathcal{H}$ is a hierarchical watershed with respect to~$\Phi$.
\end{proof}}

We will now focus on the backward implication of Theorem~\ref{thm:multi-scaleOptimality}, namely on the following property.

\begin{proper}[hierarchical watershed optimality]
\label{prop:OptimalityOfHierarchicalWS}
 Let~$\Phi = (\Phi_0(G)=G, \ldots, \Phi_N(G))$ be a one-by-one filtration series of~$G$. Let~$\mathcal{H} = (C_0, \ldots, C_N)$ be a hierarchical watershed with respect to~$\Phi$. Then, for any value~$\lambda$ in~$\{0, \ldots, N\}$, there exists a minimum spanning forest~$F_\lambda$ of $G$ which is rooted in the set that contains every minimum of~$G$ whose extinction value is greater than { or equal to}~$\lambda$ and whose connected component partition is~$C_\lambda$.
\end{proper}

\begin{proof}
In order to simplify the writing of the proof, for any~$i \in \{0, \ldots, N\}$, let us denote by~$\mathcal{M}_i$ the set which contains every minimum~$M$ of~$G$ whose extinction value (for the filtration series~$\Phi$) is greater than or equal to~$i$ (\ie{} $\epsilon(M) \geq i$). In this proof, when no confusion may occur, we also write~$\Phi_i$ instead of~$\Phi_i(G)$. Furthermore, by abuse of notation, we write~$\Phi_i$ for the weight map of the edge-weighted graph~$\Phi_i(G)$. The proof of Property~\ref{prop:OptimalityOfHierarchicalWS} is made by induction.
    
    (1) {\bf Induction base}. As~$\mathcal{M}_0$ is the set which contains every minimum of~$G$, by Theorem~\ref{thm:WCOptimality}, we can affirm that there exists a minimum spanning forest~$F_0$ of $G$ which is rooted in~$\mathcal{M}_0$ and whose connected component partition is~$C_0$.
    
    (2) {\bf Induction hypothesis}. Let~$i \geq 1$. Assume that for any~$\lambda \in \{0, \ldots, i-1\}$, there exists a minimum spanning forest~$F_\lambda$ of~$G$ which is rooted in~$\mathcal{M}_\lambda$ and whose connected component partition is $C_\lambda$.
    
    (3) {\bf Induction}. In order to complete the proof of
    Property~\ref{prop:OptimalityOfHierarchicalWS} by induction, we
    are going to show that if the induction hypothesis holds true, then,
    there also exists a minimum spanning forest~$F_i$ of~$G$ which is
    rooted in~$\mathcal{M}_i$ and whose connected component partition
    is $C_i$. By induction hypothesis, there exists a minimum spanning
    forest~$F_{i-1}$ of~$G$ which is rooted in~$\mathcal{M}_{i-1}$ and
    whose connected component partition is~$C_{i-1}$.  Since~$\Phi$ is a
    {one-by-one filtration, there exists a unique minimum~$M^\star \in \mathcal{M}_{i-1}\setminus \mathcal{M}_i$.}
    Hence, as~$C_{i-1}$ is a watershed cut  of~$\Phi_{i-1}$, there exists a region~$A_1$ of~$C_{i-1}$ which
    {includes~$M^\star$.} 
    {Since~$\mathcal{H}$ is a hierarchy, there exists a region~$B$ in~$C_i$ that includes~$A_1$. Again, since~$\mathcal{H}$ is a hierarchy,~$B$ is a union of regions of~$C_{i-1}$. Since~$\Phi$ is one-by-one, the number of regions decreases by exactly one from~$C_{i-1}$ to~$C_i$. Therefore, there exists a unique region~$A_2$ of~$C_{i-1}$ such that~$B = A_1 \cup A_2$.}
    Let~$u$ be an edge of minimum weight (for the weight map~$\Phi_0$) in the set that contains every edge of~$G$
    which contains a vertex in~$A_1$ and a vertex in~$A_2$.
    {The existence of~$u$ follows from the fact that~$B$ is connected. We are going to establish that~$F_{i-1} \cup \{u\}$ is an MSF of~$G$ rooted in~$\mathcal{M}_i$ whose connected component partition is~$C_i$, which will complete the proof of Property~\ref{prop:OptimalityOfHierarchicalWS}. The connected component partition of~$F_{i-1} \cup \{u\}$ is~$C_i$, since adding~$u$ only "merges" the two components~$A_1$ and~$A_2$ of~$F_{i-1}$. Moreover, it can be observed that~$F_{i-1} \cup \{u\}$ is rooted in~$\mathcal{M}_i$ as the only minimum removed from~$\mathcal{M}_{i-1}$ to obtain~$\mathcal{M}_i$ is the one included in~$A_1$, while~$A_2$ includes the unique minimum of~$\Phi_i$ included in~$B$. By Lemma~\ref{lem:1}, as~$\mathcal{M}_i \subseteq \mathcal{M}_{i-1}$, the forest~$F_{i-1}$ is included in at least one MSF rooted in~$\mathcal{M}_i$. Any such forest is obtained from~$F_{i-1}$ by adding a single edge outgoing from~$A_1$. Hence, by Lemma~\ref{lem:2}, in order to establish that~$F_{i-1} \cup \{u\}$ is an MSF of~$G$ rooted in~$\mathcal{M}_i$, it is sufficient to prove that the weight for~$G$ of any edge~$v$ outgoing from~$A_1$ is greater than or equal to the weight of~$u$.}
    Let~$v$ be any edge outgoing from~$A_1$. We distinguish two cases. First,
    if~$v$ contains a vertex in~$A_2$, by definition of~$u$, we
    have~$\Phi_0(v) \geq \Phi_0(u)$. Secondly, let us now assume
    that~$v$ does not contain a vertex in~$A_2$. Thus, the edge~$v$ is
    outgoing from~$B$ and therefore belongs to the cut set
    of~$C_i$. Let~$x$ and~$y$ be such that~$v = \{x,y\}$ and~$x \in
    A_1$. Since~$\Phi$ is one-by-one and 
    {$M^\star \subseteq A_1$} 
    is not included in any minimum of~$\Phi_i$, the unique minimum~$M_x$ of~$\Phi_i$ included in~$B$ is included in~$A_2$. 
    { As~$C_i$ is a watershed cut of~$\Phi_i$, and since~$v$ belongs to the cut set of~$C_i$,}
    there exists a path~$\pi_x =(x_0=x, \ldots,~x_k)$ which is descending for~$\Phi_i$, which does not cross the cut set of~$C_i$, and which is such that~$x_k$ belongs to~$M_x$.
    {Thus, the path~$\pi_x$ remains inside~$B$. Moreover, by definition of a watershed cut, the weight of any edge~$v'$ appearing in~$\pi_x$ satisfies~$\Phi_i(v') \leq \Phi_i(v)$. Thus, since~$x \in A_1$ and~$x_k \in A_2$, there exists an edge~$u'=\{x_j,x_{j+1}\}$ along~$\pi_x$ such that~$x_j \in A_1$ and~$x_{j+1}\in A_2$.}
    By definition of~$u$, we have
    \begin{equation}
      \label{eqn:prJC1}
        \Phi_0(u') \geq \Phi_0(u)
    \end{equation}
     As previously observed, we also have:
    \begin{equation}
      \label{eqn:prJC2}
      \Phi_i(v) \geq \Phi_i(u').
    \end{equation}
    Moreover, by definition of the filtration sequence (made from
    extensive connected operators), we also have:
    \begin{equation}
            \label{eqn:prJC3}
        \Phi_i(u') \geq \Phi_0(u').
    \end{equation}
    As~$v$ belongs to the cut set induced by~$C_i$, by Lemma~\ref{lem:3}, we deduce that~$\Phi_i(v) = \Phi_0(v)$. Thus, from Equation~\ref{eqn:prJC2}, we deduce  that~$\Phi_0(v) \geq \Phi_i(u')$. Thus, from
    Equation~\ref{eqn:prJC3}, we also have~$\Phi_0(v) \geq \Phi_0(u')$, which, from  Equation~\ref{eqn:prJC1} implies that~$\Phi_0(v) \geq \Phi_0(u)$, which completes the proof of Property~\ref{prop:OptimalityOfHierarchicalWS}.
\end{proof}

\end{document}